\documentclass{article}
\usepackage{arxiv}

\usepackage{algorithm,algorithmic}
\usepackage{amsmath,amsthm,amssymb}
\usepackage{colortbl,soul,xcolor,bbm,caption,subcaption}
\usepackage{booktabs,multicol,multirow,makecell,longtable} 
\usepackage{mathtools}
\usepackage{xspace}
\usepackage{epsfig}
\usepackage{url,hyperref}

\DeclareMathOperator*{\argmax}{\arg\!\max}
\DeclareMathOperator*{\argmin}{\arg\!\min}

\def\mathcolor#1#{\@mathcolor{#1}}
\def\@mathcolor#1#2#3{%
  \protect\leavevmode
  \begingroup
    \color#1{#2}#3%
  \endgroup
}

\newcommand{\hlc}[2][yellow]{{%
    \colorlet{foo}{#1}%
    \sethlcolor{foo}\hl{#2}}%
}
\newcommand{\Mod}[1]{\ (\mathrm{mod}\ #1)}

\newtheorem{thm}{Theorem}

\newtheorem{lma}{Lemma}

\newtheorem{exm}{Example}

\newtheorem{cor}{Corollary}

\newtheorem{pro}{Proposition}

\newcommand{\ignore}[1]{}
\AtBeginDocument{%
  \providecommand\BibTeX{{%
    \normalfont B\kern-0.5em{\scshape i\kern-0.25em b}\kern-0.8em\TeX}}}


\begin{document}

\title{Analysis of Evolutionary Diversity Optimization for Permutation Problems}
\date{}

\author{Anh Viet Do
\\Optimisation and Logistics
\\The University of Adelaide, Adelaide, Australia
\And
Mingyu Guo
\\Optimisation and Logistics
\\The University of Adelaide, Adelaide, Australia
\And
Aneta Neumann
\\Optimisation and Logistics
\\The University of Adelaide, Adelaide, Australia
\And
Frank Neumann
\\Optimisation and Logistics
\\The University of Adelaide, Adelaide, Australia
}
\maketitle

\begin{abstract}
Generating diverse populations of high quality solutions has gained interest as a promising extension to the traditional optimization tasks. This work contributes to this line of research with an investigation on evolutionary diversity optimization for three of the most well-studied permutation problems, namely the Traveling Salesperson Problem (TSP), both symmetric and asymmetric variants, and Quadratic Assignment Problem (QAP). It includes an analysis of the worst-case performance of a simple mutation-only evolutionary algorithm with different mutation operators, using an established diversity measure. Theoretical results show many mutation operators for these problems guarantee convergence to maximally diverse populations of sufficiently small size within cubic to quartic expected run-time. On the other hand, the result on QAP suggests that strong mutations give poor worst-case performance, as mutation strength contributes exponentially to the expected run-time. Additionally, experiments are carried out on QAPLIB and synthetic instances in unconstrained and constrained settings, and reveal much more optimistic practical performances, while corroborating the theoretical finding regarding mutation strength. These results should serve as a baseline for future studies.
\end{abstract}

\keywords{evolutionary algorithms, diversity maximization, traveling salesperson problem, quadratic assignment problem, run-time analysis}

\maketitle

\section{Introduction}

\emph{Evolutionary diversity optimization} (EDO) aims to compute a set of diverse solutions that all have high quality while maximally differing from each other. This area of research started by Ulrich and Thiele~\cite{Ulrich2010,Ulrich2011}\footnote{The idea of finding maximally diverse solutions with genetic algorithms can be traced back to \cite{Ronald}} has recently gained significant attention within the evolutionary computation community, as evolution itself is increasingly regarded as a diversification device rather than a pure objective optimizer \cite{Pugh2016}. After all, in nature, deviating from the predecessors leads to finding new niches, which reduces competitive pressure and increases evolvability \cite{Lehman2013}. This perspective challenges the notion that evolutionary processes are mainly adaptive with respect to some quality metrics, and that population diversity is only in service of adapting its individuals and is without intrinsic worth. In applications, diversity optimization is a useful extension to the traditional optimization tasks, as a set of multiple interesting solutions has more practical value than a single very good solution.

Formally, given an objective function $f$ to be minimized (the maximization variant is defined similarly) over a feasible solution space $S$, a threshold value $F$, a natural number $\mu$, and a diversity function $div$ over solution sets, EDO searches for a solution set $P$ such that

\begin{equation}\label{eq:problem}
P\in\argmax_{Q\subseteq 2^S:|Q|=\mu}\left\lbrace div(Q)\mid\forall x \in Q,f(x)\leq F\right\rbrace.
\end{equation}

EDO research arose amid the interest in diverse solutions problems in the broader optimization research, which is currently relevant \cite{Ingmar2020,Baste2020,hanaka2020finding}. This class of problem addresses the practical necessity of having multiple solutions, such as providing alternatives that are, by being diverse, robust in allowing quick adaptation to changes in the problems. Furthermore, it helps the users be more flexible in adjusting for gaps between the problem models and real-world settings, arising frequently from modeling errors, and imprecise/uncertain aspects of the problem \cite{Schittekat2009}. Additionally, diverse solution sets contain rich information about the problem instance (as opposed to similar solution sets), which aids the users in making better decisions. While one could make use of existing solution enumeration techniques to obtain such values, the number of relevant solutions can grow rapidly \cite{Glover2000}, overburdening the decision makers. Moreover, top-k enumeration, while being expensive, often yields highly similar solutions \cite{Wang2013,Yuan2015}. These reasons justify separate treatments of diverse solutions problems from those of enumeration and multi-modal optimization.

\subsection{Related work}

Before diversity of solutions became of interest as it is in EDO, researchers in evolutionary computation considered the ``multi-solution problems'', in which many solutions of interest are sought. This has been one of the main motivations and applications of \emph{Evolutionary multi-modal optimization} \cite{Shir2012,wong2015evolutionary,Preuss2015,Li2017}. Most effort in this area has been spent on continuous search spaces, while fewer works such as \cite{Ronald,Angus2006,Han2018,Huang2018,Huang2019,Huang2020} deal with combinatorial problems. It is important to note that in multi-modal optimization, diversity of solutions primarily serves as a necessary property of local optima discovery processes, and not as an optimization objective in its own right. In fact, multi-solution in this context is mostly regarded as an intermediate problem, the solutions to which facilitate the search for global optima.

On the other hand, there have been studies in evolutionary computation that explore different relationships between quality and diversity. These include a trend emerging from the evolutionary robotics that is \emph{Quality Diversity}, which focuses on exploring diverse niches in the feature space and maximizing quality within each niche \cite{Pugh2016,Cully2018,Gravina2019,Alvarez2019}. This paradigm seeks to maximize diversity via niches discovery, relying on determining a well-defined notion of niches. Other studies place more importance on diversity measured directly from solutions, applying evolutionary techniques to generate images with varying features \cite{Alexander2017}, or to compute diverse Traveling Salesperson Problem (TSP) instances \cite{Gao2020,Bossek2019} useful for automated algorithm selection and configuration \cite{Kerschke2019}. 
Different indicators for measuring the diversity of sets of solutions in EDO algorithms such as the star discrepancy~\cite{Neumann2018}
or popular indicators from the area of evolutionary multi-objective optimization~\cite{Neumann2019} have been investigated to create high quality sets of solutions. The studies \cite{Nikfarjam2021,Nikfarjam20211} explore EDO for symmetric TSP solutions using entropy measure, while non-scalar diversity measures are proposed in \cite{Do2020}. Others also consider EDO in knapsack problems \cite{Bossek2021}, minimum spanning tree problem \cite{Bossek20211}, and submodular optimization \cite{Neumann2021}.

Outside the realm of evolutionary computation, the diverse solutions problem has been studied as an extension to many important classes of difficult problems. Many interesting results have been discovered for diverse solutions to constraint satisfaction and optimization problems \cite{Emmanuel05,Petit15,Ruffini2019}. Extensions to the traditional solving paradigms have been made to compute diverse solutions to SAT and Answer Set Problem, using existing powerful frameworks \cite{Nadel2011,Eiter2009}. Others have proposed methods for Mixed Integer Programming that incorporate diversity into quality-based heuristics \cite{Glover2000,Danna,Trapp2015}. More recently, the first provably fixed-parameter tractable algorithms have been proposed for diverse solutions to a broad class of graph-based vertex problems \cite{Baste2020}, via modification of dynamic programming on the graph's tree decomposition. This inspired subsequent research on other combinatorial structures such as trees, paths \cite{hanaka2020finding,Hanaka2022}, matching \cite{Fomin20211}, independent sets \cite{Fomin2021}, and linear orders \cite{Arrighi2021}. On the other hand, a general modeling framework has been proposed for diverse solutions to any combinatorial problem \cite{Ingmar2020}.

\subsection{Our contribution}

We contribute to the understanding of evolutionary diversity optimization on combinatorial problems, mainly from the theoretical run-time perspective. We refer to \cite{DBLP:books/daglib/0025643,BookDoeNeu} for comprehensive overviews of run-time analysis of discrete evolutionary optimizers. Specifically, we focus on symmetric and asymmetric TSP (abbreviated as STSP and ATSP, respectively), and Quadratic Assignment Problem (QAP), two classical NP-hard problems where solutions are represented as permutations, and the latter of which has also been attempted with genetic algorithms \cite{Tate1995,Tosun2014,Misevicius2004,Ahuja2000}. The structures of the solution spaces associated with these problems are similar, yet different enough to merit distinct diversity measures. We use two approaches to measuring diversity: one based on the representation frequencies of ``objects'' (edges or assignments) in the population, and one based on the minimum distance between each solution and the rest. We consider the simple evolutionary algorithm that only uses mutation, and examine its worst-case performances in diversity maximization when various mutation operators are used. Our results reveal how properties of a population influence the effectiveness of mutations in equalizing objects' representation frequencies. Additionally, we carry out experimental benchmark on various QAPLIB instances in unconstrained (no quality threshold) and constrained settings, using a simple mutation-only algorithm with 2-opt mutation. The results indicate optimistic run-time to maximize diversity on QAP solutions, and show maximization behaviors when using different diversity measures in the algorithm. These contributions are included in the conference version of this article, published in the proceedings of GECCO 2021 \cite{Do2021}. Note that the experimental results on TSP already presented in \cite{Do2020} are not included here. Nevertheless, those results also exhibit, when the population is small, lower run-time than our worst-case analysis implies.

This article extends the conference version by first expanding the scope of the worst-case run-time result in QAP: we generalize the proof to k-opt mutation from 2-opt mutation, to account for larger mutation strength choices (Section \ref{sec:qap}). Secondly, we augment the lemma for this result by proving constructively that the condition for the lack of local optima is non-trivial in case of 2-opt. Thirdly, we add theoretical results for the algorithm on ATSP using 3-opt and 4-opt mutation (Section \ref{sec:atsp}). Finally, we perform additional experimentation with unconstrained scenarios in QAP, using exhaustive combinations of problem size and population size values, and varying mutation strengths (Section \ref{sec:unconstrained}).

Our results regarding ATSP tours show that strict improvements are guaranteed at population sizes greater than the upper bound in the corresponding results regarding STSP tours. However, this extra flexibility comes at a cost of increased asymptotic worst-case expected run-time. We reveal similar insight in diversifying QAP solutions: the mutation strength contributes exponentially to the expected run-time, which is most significant at near-optimal diversity. This theoretical phenomenon from the use of strong mutations aligns with our experimental results. Furthermore, we observe in the experiment that the hard cases for the algorithm occur when the population size is close to multiples of the problem size, and that in all other cases, average run times to reach the optimum are significantly lower than the worst-case run-time.

The paper is structured as follows. In Section~\ref{sec:problem}, we introduce the STSP/ATSP and QAP in the context of evolutionary diversity optimization and describe the algorithm that is the subject of our analysis. In Section~\ref{sec:method}, we define the diversity measures used for the three problems. Section~\ref{sec:analysis} includes the run-time analysis of the algorithm. We report on our experimental investigations in Section~\ref{sec:experiment} and finish with some conclusions.

\section{Maximizing diversity in STSP, ATSP and QAP}\label{sec:problem}

Throughout the paper, we use the shorthand $[n]=\{1,\ldots,n\}$. The symmetric STSP is formulated as follow. Given a complete undirected graph $G=(V,E)$ with $n=|V|$ nodes, $m=n(n-1)/2=|E|$ edges and the distance function $d:V\times V\to\mathbb{R}_{\geq 0}$, the goal is to compute a tour of minimal cost that visits each node exactly once and finally returns to the original node. Let $V=[n]$, the goal is to find a tour represented by the permutation $\pi:V\to V$ that minimizes the tour cost
\begin{align*}
c(\pi) = d(\pi(n),\pi(1)) + \sum_{i=1}^{n-1} d(\pi(i),\pi(i+1)). 
\end{align*}

An ATSP instance is defined with a directed graph containing $n(n-1)$ edges, an asymmetric edge weight function $d$, and an identical cost function. The QAP is formulated as follow. Given facilities $F=\{f_1,\dots,f_n\}$, locations $L=\{l_1,\dots,l_n\}$, weights $w:F\times F\to\mathbb{R}_{\geq 0}$, flows $f:L\times L\to\mathbb{R}_{\geq 0}$, find a 1-1 mapping $a:F\to L$ that minimizes the cost function
\[c(a)=\sum_{i,j\in F}w(i,j)f(a(i),a(j)).\]
A problem instance is encoded with two $n\times n$ matrices: one for $w$ and one for $f$. Similar to STSP and ATSP, we can abstract $F$ and $L$ like we do $V$: $F=[n]$ and $L=[n]$. Therefore, each mapping is uniquely defined by a $[n]\to[n]$ permutation. Given that there is a 1-to-1 correspondence between all permutations and all mappings, the solution space is the permutation space. This is an important distinction between STSP/ATSP and QAP from which low-level differences between the diversity measures in each case emerge. Furthermore, a STSP tour corresponds to 2 edge-disjoint ATSP tours, which highlights the differences between the two types of search spaces. On the other hand, the high level structure of a tour (directed or undirected) is identical to that of a mapping, so the notions like distance or diversity are the same for all three problems above a certain layer of abstraction.

In this paper, we consider diversity optimization for STSP, ATSP and QAP, which is a special case of \eqref{eq:problem}. For each problem instance, we are to find a set $P$ of $\mu = |P|$ solutions that is diverse with respect to some diversity measure, while each solution meets a given quality threshold. Typically, this threshold is set to be $(1+\alpha)OPT$, where $OPT$ denotes the optimal objective value and $\alpha>0$ decides the optimality gap. Such a formulation requires that the final population only contains $(1+\alpha)$-approximations for a problem instance. We assume that the optimal solution is known for a given instance, which does not eliminate the problem's intractability. We refer to such an instance a $(\mu,\alpha)$-instance of the diversity optimization problem.

We consider $(\mu+1)$-EA which was used to diversify STSP tours \cite{Do2020}. The algorithm is described in Algorithm \ref{alg:ea}; it takes in the raw threshold value instead of $\alpha$, and the diversity measure to be maximized. It uses only mutation to introduce new genes, and tries to minimize duplication in the gene pool with elitist survival selection. The algorithm slightly modifies the population in each step by mutating a random solution, essentially performing random local search in the population space. As with many evolutionary algorithms, it can be customized for different problems, in this case by modifying the mutation operator and the diversity measure. In this work, we are interested in worst-case performances of the algorithm under the assumption that any offspring is acceptable. We consider the usual black-box complexity model, where the run-time is defined as the number of fitness evaluations \cite{BookDoeNeu}. For $(\mu+1)$-EA, it is the same as the number of iterations.

\begin{algorithm}[ht]
\begin{algorithmic}[1]
\STATE \textbf{Inputs:} instance $c$, set size $\mu$, threshold value $F$, diversity measure function $div$
\STATE $P\gets$ initial population
\WHILE{stopping criteria not met}
\itemindent=0pc
\STATE $I\gets randomSelect(P)$
\STATE $I'\gets mutate(I)$
\IF{$c(I')\leq F$}
\itemindent=0pc
\STATE $P\gets P\cup\{I'\}$
\STATE $I''\gets\argmin_{J\in P}\{div(P\setminus\{J\})\}$
\STATE $P\gets P\setminus\{I''\}$
\ENDIF
\ENDWHILE
\STATE \textbf{return} $P$
\end{algorithmic}
\caption{$(\mu+1)$-EA for diversity optimization}
\label{alg:ea}
\end{algorithm}

\section{Diversity measures}\label{sec:method}

The structure of a STSP/ATSP tour is similar to that of a QAP mapping in the sense that they are both each defined by a set of objects: edges in tours and assignments in mappings. In fact, the size of such a set is always equal to the instance size $n$. For this reason, diversity measures for populations of tours, and those for populations of mappings share many commonalities. In particular, we describe two measures introduced in \cite{Do2020}, customized for STSP, ATSP and QAP. For consistency, we use the same notations for the same concepts between the three problems unless told otherwise. We also refer to \cite{Do2020} for more in-depth discussion on the measures, and fast implementations of the survival selection for Algorithm \ref{alg:ea} based on these measures, which can be customized for ATSP and QAP solutions. We remark that while QAP is a generalization of STSP and ATSP, their corresponding diversity optimization problems do not share this relationship.

\subsection{Edge/Assignment diversity}

In this approach, we consider diversity in terms of equal representations of edges/assignments in the population. It takes into account, for each object, the number of solutions containing it, among the $\mu$ solutions in the population.

For STSP and ATSP, given a population of tours $P$ and an edge $e \in E$, we denote by $n(e, P)$ its edge count, which is defined,
\begin{align*}
    n(e,P)=\left|\{T\in P\mid e\in E(T)\}\right|\in\{0,\ldots,\mu\}
\end{align*}
where $E(T) \subset E$ is the set of edges used by tour $T$. Then in order to maximize the edge diversity we aim to minimize, in the lexicographic order, the vector
\begin{align}\label{eq:N_div_TSP}
    \mathcal{N}(P)=\text{sort}\left(n(e_1, P),n(e_2, P),\ldots,n(e_m, P)\right),
\end{align}
where sorting is performed in descending order. As shown in \cite{Do2020}, since the total edge count is fixed, this equalizes the counts across edges, thus maximizing the pairwise distances sum
\begin{align*}
    D_1(P)=\sum_{T_1\in P}\sum_{T_2\in P}|E(T_1)\setminus E(T_2)|=|P|(|P|-1)n+\sum_{e\in E}n(e,P)(1-n(e,P)).
\end{align*}

Similarly for QAP, given a population of mappings $P$, we denote by $n(i,j,P)$ its assignment count as follow,
\begin{align*}
    n(i,j,P)=\left|\{a \in P\mid(i,j)\in A(a)\}\right|\in\{0, \ldots, \mu\}
\end{align*}
where $A(a) \subset [n]\times[n]$ is the set of assignments used by solution $a$. The corresponding vector to be minimized in order to maximize assignment diversity is then
\begin{align}\label{eq:N_div_QAP}
    \mathcal{N}(P) = \text{sort}\left(n(i,j, P)\right)_{i,j\in[n]},
\end{align}
in the lexicographic order where sorting is performed in descending order. Similarly, this maximizes the following quantity
\begin{align*}
    D_1(P) = \sum_{a\in P}\sum_{b\in P}|A(a)\setminus A(b)|.
\end{align*}

While this diversity measure is directly related to the notion of diversity, using it to optimize populations has its drawbacks. As mentioned in \cite{Do2020}, populations containing clustering subsets of solutions can have high $D_1$ score, which is undesirable. For this reason, we also consider another measure that circumvents this issue.

\subsection{Equalizing pairwise distances}
Instead of maximizing all pairwise distances at once, this approach focuses on maximizing smallest distances, potentially reducing larger distances as a result. Optimizing for this measure reduces clustering phenomena, as well as tends to increase the distance sum. In this approach, we minimize the following vector lexicographically
\begin{align}\label{eq:D_div}
\mathcal{D}(P) = \text{sort}\left(\left(o_{X,Y}\right)_{X,Y\in P}\right),
\end{align}
where sorting is performed in descending order, and $o_{XY}=|E(X)\cap E(Y)|$ if $X$ and $Y$ are STSP tours, and $o_{XY}=|A(X)\cap A(Y)|$ if they are QAP mappings. Doing this would also maximize the following quantity
\begin{align*}
D_2(P)=\sum_{T\in P}\min_{X\in P\setminus\{T\}}\left\lbrace|E(T)\setminus E(X)|\right\rbrace,\quad\text{or}\quad D_2(P)=\sum_{a\in P}\min_{b\in P\setminus\{T\}}\left\lbrace|A(a)\setminus A(b)|\right\rbrace.
\end{align*}
We know from Hamiltonian decomposition of complete undirected graphs (Theorem 1 in \cite{Alspach1990}) that for any STSP tour population $P$ of size at most $\left\lfloor\frac{n-1}{2}\right\rfloor$, we have
\begin{align*}
\argmin_P\{\mathcal{N}(P)\}=\argmin_P\{\mathcal{D}(P)\}=\argmin_P\{D_1(P)\}=\argmin_P\{D_2(P)\}.
\end{align*}
One of the results in this study implies that the same is true for any QAP mapping population of size at most $n$. On the other hand, when $\mu>n$, $P^*\in\argmax_P\{D_2(P)\}$ doesn't necessarily imply $P^*\in\argmin_P\{\mathcal{D}(P)\}$, as shown by the following example.
\begin{exm}
For a QAP instance where $n=4$ and $\mu=5$, let $a_1=(1,2,3,4)$, $a_2=(1,3,4,2)$, $a_3=(3,2,4,1)$, $a_4=(2,4,3,1)$, $a_5=(2,3,1,4)$, $a_6=(4,2,1,3)$, $a_7=(3,1,2,4)$, $P=\{a_1,a_2,a_3,a_4,a_5\}$, $P'=\{a_1,a_2,a_4,a_6,a_7\}$, we have $D_2(P)=D_2(P')=15$ which is the maximum. However,
\begin{align*}
\mathcal{D}(P)=(1,1,1,1,1,1,1,1,0,0)>
\mathcal{D}(P')=(1,1,1,1,0,0,0,0,0,0).
\end{align*}
\end{exm}
Because of this, it is tricky to determine the maximum achievable diversity $\mathcal{D}$ in such cases. For now, we assume the upper bound $\mu n$ of $D_2$, which is relevant to our experimentation in Section \ref{sec:experiment}.

\section{Properties and worst-case results}\label{sec:analysis}

We investigate the theoretical performance of Algorithm \ref{alg:ea} in optimizing for $\mathcal{N}$ defined in \eqref{eq:N_div_TSP} and \eqref{eq:N_div_QAP} without the quality criterion. For STSP, we consider three mutation operators: 2-opt, 3-opt (insertion) and 4-opt (exchange) on the visit-order representation. For ATSP, we consider 3-opt and 4-opt, the former of which is typically used in local search heuristics \cite{Kanellakis1980,Cirasella2001}. For QAP, we consider the k-opt mutation which is a generalization of the 2-opt transposition. Regarding the solution's representation, we assume that each of the operators samples its corresponding neighborhood (in the underlying space and not the representation space) uniformly, so the choice of representation is only a matter of implementation. As mentioned, we use, in our proofs, the visit-order representation for STSP/ATSP solutions, and the natural assignment permutation for QAP solutions. As we will see in the proofs, these representations can be used to index the subset of the neighborhood that is relevant to the algorithm's behaviors, thus allowing for accurate counting.

For the analysis, we are interested in the number of iterations until a population with optimal diversity is achieved. Our derivation of results is predicated on the lack of local optima: we only consider scenarios where it is always possible to strictly improves diversity in a single step of the algorithm. Formally, we say that Algorithm \ref{alg:ea}, using the mutation operator sampling from the neighborhood $B(\cdot)$, encounters a local optimum if its current population, $P$, is such that
\[\forall a,b\in P,\forall b'\in B(b),\mathcal{N}(P)\leq\mathcal{N}(P\setminus\{a\}\cup\{b'\}).\]

This consideration facilitates positive lower bounds of the success rate, allowing us to apply the drift analysis technique \cite{Lengler2019}. Since this technique is intuitive, we do not explicitly mention it in the proofs for brevity's sake.

\subsection{STSP}\label{sec:tsp}

Let $d_P=\max_{e\in E}\{n(e,P)\}$ and $c_P=|{e\in E\mid n(e,P)=d_P}|$. For each node $i$, let $in(i)$ be the set of edges incident to $i$, and $s(i,P)=\sum_{e\in in(i)}n(e,P)$. For each tour $I$, let $2opt(I,i,j)$ be the tour resulted from applying 2-opt to $I$ at positions $i$ and $j$ in the permutation, and $4opt(I,i,j)$ be the tour from exchanging $i$-th and $j$-th elements in $I$. We assume $n\geq4$ as the other cases are trivial. Note that we must have $n\geq6$ for 4-opt to be applicable, so it is implicitly assumed when appropriate.

First, we show that any population with sub-optimal diversity and of sufficiently small size presents no local optima to the Algorithm \ref{alg:ea} with 2-opt mutation, while deriving a lower bound of the probability where a strict improvement is made in a single step. Here, we regard a single-step improvement as the reduction of either $c_P$ or $d_P$, as aligned with the algorithm's convergence path. Furthermore, it has been shown, using Hamiltonian cycle decomposition, that for any $\mu\leq\left\lfloor\frac{n-1}{2}\right\rfloor$, there is a $\mu$-size population $P$ where $d_P=1$ \cite{Do2020}. As such, within this context, any sub-optimally diverse population $P$ has $d_P\geq2$.

\begin{lma}\label{lemma:ea_step_2opt_tsp}
Given a population of tours $P$ such that $2\leq\mu\leq\left\lfloor\frac{n+2}{4}\right\rfloor$ and $d_P\geq2$, there exists a tour $I\in P$ and a pair $(i,j)$, such that $P'=(P\setminus\{I\})\cup\{2opt(I,i,j)\}$ satisfies,
\begin{equation}\label{eq:improvement}
(c_P>c_{P'}\wedge d_P=d_{P'})\vee d_P>d_{P'}.
\end{equation}
Moreover, in each iteration, the Algorithm \ref{alg:ea} with 2-opt mutation on a $(\mu,\infty)$-instance makes such an improvement with probability at least \[\frac{2[(n-1)(d_P-2)+1]}{\mu n(n-3)}.\]
\end{lma}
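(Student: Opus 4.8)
The plan is to reduce everything to a single structural invariant and a clean sufficient condition for improvement, then to lower-bound the number of improving moves by a counting argument. First I would record that since every tour uses exactly two edges at each node, $s(i,P)=\sum_{e\in in(i)}n(e,P)=2\mu$ holds for every node $i$ and every population $P$, and that this is preserved by any 2-opt move (the move removes two edges incident to $u$ and $v$ and reinserts two edges at the same nodes). This invariant is what bounds, at each node, how many incident edges can carry a large count. Next I would isolate a \emph{sufficient} condition for \eqref{eq:improvement}: a 2-opt on some $I\in P$ deletes two edges and inserts two edges $g_1,g_2$, and it suffices that it deletes a maximum-count edge $e^*$ (with $n(e^*,P)=d_P$) while both inserted edges satisfy $n(g_1,P),n(g_2,P)\le d_P-2$. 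Indeed, only $g_1,g_2$ can gain count, and each rises to at most $d_P-1<d_P$, so $d_{P'}\le d_P$ and no new maximum-count edge appears; meanwhile $e^*$ falls to $d_P-1$, so either $c_P$ strictly decreases (if other maximum edges survive) or $d_P$ itself drops. I will call such a move \emph{good}.

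To count good moves when $d_P\ge3$, I would fix one maximum-count edge $e^*=(u,v)$ and one of the $d_P$ tours $I$ containing it, and count admissible partners whose 2-opt with $e^*$ is good. There are $n-3$ admissible partner edges (all tour edges except $e^*$ and its two neighbours), and a partner at positions $(j,j+1)$ is \emph{bad} only if $(u,\pi(j))$ or $(v,\pi(j+1))$ has count $\ge d_P-1$. Using $s(u,P)=2\mu$ and discounting $e^*$ itself, the number of such heavy edges at $u$ other than $e^*$ is at most $(2\mu-d_P)/(d_P-1)$, and likewise at $v$; since each heavy neighbour occupies a single tour position, the number of good partners in $I$ is at least $(n-3)-2(2\mu-d_P)/(d_P-1)$. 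Substituting $\mu\le(n+2)/4$ and simplifying, this is at least $\big((n-1)(d_P-2)-1\big)/(d_P-1)$. Summing over the $d_P$ tours through $e^*$ reduces the target $(n-1)(d_P-2)+1$ to the inequality $(n-1)(d_P-2)+1\ge 2d_P$, which holds for $n\ge6$ (forced by $\mu\ge2$) and $d_P\ge3$; this simultaneously gives existence and the counted lower bound in that range.

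The genuinely delicate case, and the main obstacle, is $d_P=2$, where the statement asserts only that \emph{one} good move exists. Here the heavy threshold $d_P-1=1$ degenerates: every used edge is heavy, so the union bound above becomes vacuous, admitting as many as $4\mu\approx n$ blocked positions against only $n-3$ admissible ones. A good move now requires deleting a count-$2$ edge and inserting two \emph{fresh} (count-$0$) edges that are consecutive in some tour. The tight bound $\mu\le\lfloor(n+2)/4\rfloor$ guarantees an abundance of count-$0$ edges globally, but a single-tour, single-edge union bound does not locate a consecutive fresh pair. I expect to settle this by a global pigeonhole: assume no good move exists, so every count-$2$ edge in every tour has all admissible partners blocked, then count the resulting incidences of used edges at the relevant nodes across all tours; the invariant $s(i,P)=2\mu$ together with $\mu\le\lfloor(n+2)/4\rfloor$ should force strictly more used edges than the graph can hold, a contradiction. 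This base case, rather than the arithmetic of the $d_P\ge3$ count, is where the real work lies.

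Finally I would convert the count into the claimed probability. The algorithm draws $I$ uniformly from $P$ (probability $1/\mu$) and then a uniform 2-opt neighbour; a tour on $n$ nodes has exactly $n(n-3)/2$ distinct 2-opt neighbours (the non-adjacent edge pairs), so each specific good move is produced with probability $\tfrac1\mu\cdot\tfrac{2}{n(n-3)}$. Since the good moves counted above are distinct (tour, move) pairs, summing their probabilities yields at least $\frac{2[(n-1)(d_P-2)+1]}{\mu n(n-3)}$, which is positive and hence also certifies the existence claim.
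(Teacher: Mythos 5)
Your overall strategy --- fix a maximum-count edge $e^*=\{i_1,i_2\}$ in a tour $I$ containing it, call a 2-opt \emph{good} when it deletes $e^*$ and both inserted edges have count at most $d_P-2$, bound the bad partner positions via the invariant $s(i,P)=2\mu$ and the pigeonhole principle, then convert the count of good (tour, neighbour) pairs into the probability bound --- is exactly the paper's, and your arithmetic for $d_P\ge3$ is sound. The genuine gap is the case $d_P=2$, which you correctly identify as unresolved by your union bound but then only conjecture how to close (``I expect to settle this by a global pigeonhole\dots should force strictly more used edges than the graph can hold''). That case cannot be left as a sketch: it is the case the lemma needs most, since the claimed probability at $d_P=2$ amounts precisely to the existence of a single good move, and the ensuing run-time theorem spends most of its budget on the final descent from $d_P=2$ to $d_P=1$. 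As written, your proposal proves the lemma only for $d_P\ge3$.

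The fix, however, is local and needs no global argument; your per-endpoint accounting is simply lossier than the problem allows. The candidate inserted edges at $i_1$ are $\{i_1,i_k\}$ for $k=3,\dots,n-1$; this set excludes not only $e^*$ but also the tour edge $\{i_1,i_n\}$, whose count is at least $1$ because $I$ itself uses it. Hence $\sum_{k=3}^{n-1}n(\{i_1,i_k\},P)\le 2\mu-d_P-1$, not $2\mu-d_P$, and symmetrically at $i_2$ (using $\{i_2,i_3\}$). The number of good partners in $I$ is then at least $(n-3)-2\left\lfloor\frac{2\mu-d_P-1}{d_P-1}\right\rfloor$, which at $d_P=2$ equals $n+3-4\mu\ge1$ whenever $\mu\le\left\lfloor\frac{n+2}{4}\right\rfloor$ --- exactly tight --- whereas your bound $(n-3)-2\cdot\frac{2\mu-d_P}{d_P-1}$ degenerates to $n+1-4\mu$, which can be $-1$ at the extreme value of $\mu$. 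With this one-unit sharpening per endpoint, the single-tour counting covers all $d_P\ge2$ uniformly (this is what the paper does), existence follows with no separate base case, and the probability conversion you give in your last paragraph goes through verbatim.
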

\begin{proof}
There must be $d_P$ tours $I$ in $P$ containing edge $e$ such that $,n(e,P)=d_P$, let $I$ be one such tour. W.l.o.g, let $I$ be represented by a permutation of nodes $(i_1,i_2,\ldots,i_n)$ where $n(\{i_1,i_2\},P)=d_P$. The operation $2opt(I,2,k)$ trades edges $\{i_1,i_2\}$ and $\{i_k,i_{k+1}\}$ in $I$ for $\{i_1,i_k\}$ and $\{i_2,i_{k+1}\}$. If for every such new edge $e'$ we have $n(e',P)<d_P-1$, then  $P'=(P\setminus\{I\})\cup\{2opt(I,2,k)\}$ satisfies \eqref{eq:improvement} since $n(\{i_1,i_k\},P')$ and $n(\{i_2,i_{k+1}\},P')$ cannot reach $d_P$. We show that there is always such a position $k$. Since $k$ can only go from $3$ to $n-1$, there are $n-3$ choices of $k$. It's the case that $s(i,P)=2\mu$ for any $i$ since each tour contributes $2$ to $s(i,P)$, and that $n(\{i_n,i_1\},P)\geq1$ and $n(\{i_2,i_3\},P)\geq1$ since $I$ contains them, thus
\begin{align}\label{eq:remaining_counts_2opt_tsp}
\sum_{k=3}^{n-1}n(\{i_1,i_k\},P)\leq2\mu-d_P-1,\quad\text{and }\quad\sum_{k=4}^{n}n(\{i_2,i_k\},P)\leq2\mu-d_P-1.
\end{align}
According to the Pigeonhole Principle, \eqref{eq:remaining_counts_2opt_tsp} implies there are at least $\delta$ elements $k$ from $3$ to $n-1$ such that $n(\{i_1,i_k\},P)<d_P-1$, where
\[\delta=n-3-\left\lfloor\frac{2\mu-d_P-1}{d_P-1}\right\rfloor.\]
Likewise, there are at least $\delta$ elements $k$ from $4$ to $n$ such that $n(\{i_2,i_k\},P)<d_P-1$. This implies that there are at least $\sigma$ elements $k$ from $3$ to $n-1$ such that $n(\{i_1,i_k\},P)<d_P-1$ and $n(\{i_2,i_{k+1}\},P)<d_P-1$, where
\begin{equation}\label{eq:sigma_2opt_tsp}
\sigma=2\delta-n+3=n-3-2\left\lfloor\frac{2\mu-d_P-1}{d_P-1}\right\rfloor.
\end{equation}
We can see that $\sigma\geq1$ when
\begin{equation*}\label{eq:mu_bound_2opt_tsp}
\mu\leq\left\lfloor\frac{(n-3)(d_P-1)+2d_P+1}{4}\right\rfloor.
\end{equation*}
This proves the first part of the lemma since $d_P\geq2$. In each iteration, the Algorithm \ref{alg:ea} selects a tour like $I$ with probability at least $d_P/\mu$. There are at least $\sigma$ different 2-opt neighbors on such a tour to produce $P'$. Since there are $n(n-3)/2$ 2-opt neighbors in total, the probability that the Algorithm \ref{alg:ea} obtains $P'$ from $P$ is at least
\[\frac{d_P}{\mu}\frac{2\sigma}{n(n-3)}\geq\frac{(n-1)(d_P-2)+1}{d_P-1}\frac{2d_P}{\mu n(n-3)}\geq\frac{2[(n-1)(d_P-2)+1]}{\mu n(n-3)},\]
where the first inequality follows from \eqref{eq:sigma_2opt_tsp} and the upper bound of $\mu$.
\end{proof}

In Lemma \ref{lemma:ea_step_2opt_tsp}, only one favorable scenario is accounted for where both edges to be traded in have counts less than $d_P-1$. However, there are other situations where strict improvements would be made as well, such as when both swapped-out edges have count $d_P$. Furthermore, a tour to be mutated might contain more than 2 edges with such count, increasing the number of beneficial choices dramatically. Consequently, the derived probability bound is pessimistic, and the average success rate might be much higher. It also means that the bound of the range of $\mu$ is pessimistic and the lack of local optima is very probable at larger population sizes, albeit with reduced diversity improvement probability.

Intuitively, larger population sizes present more complex search spaces where local search approaches are more prone to reaching sub-optimal results. It is reasonable to infer that small population sizes make diversity maximization easier for Algorithm \ref{alg:ea}. However, for 3-opt mutation, local optima can still exist even with population size being as small as 3. Here, we show a simple construction of supposedly easy cases where 3-opt fails to produce any strict improvement.

\begin{exm}
For any STSP instance of size $n\geq8$ where $n$ is a multiple of 4, we can always construct a population of 3 tours having sub-optimal diversity, such that no single 3-opt operation on any tour can improve diversity. Let the first tour be $I_1=(i_1,i_2,\dots,i_n)$, we derive the second tour $I_2$ sharing only 2 edges with $I_1$ and containing edges that form a ``crisscrossing'' pattern on $I_1$,
\begin{align*}
I_2=&(i_1,i_{n-1},\dots,i_{2k+1},i_{n-2k-1},\dots,i_{n/2-1},i_{n/2+1},i_{n/2},i_{n/2+2},\dots,i_{n/2-2k},i_{n/2+2k},\dots,i_2,i_n).
\end{align*}
The third tour $I_3$ shares no edge with $I_1$ or $I_2$ and contains many edges that ``skip one node'' on $I_1$.
\begin{align*}
I_3=&(i_1,\dots,i_{2k+1},\dots,i_{n/2-1},i_{n/2+2},\dots,i_{n/2+2k},\dots,i_n,i_{n/2},\dots,i_{n/2-2k},\dots,i_2,i_{n-1},\dots,i_{n-2k-1},i_{n/2+1}).
\end{align*}
In order to improve diversity, the operation must exchange, on either tour, at least one edge with count 2. However, any 3-opt operation with such restriction ends up trading in at least another edge used by the other tours, nullifying any improvement it makes. This population presents a local optimum for algorithms that uses 3-opt as the only solution generating mechanism. Figure \ref{fig:3opt_example} illustrates two examples of the construction with $n=8$ and $n=12$.

\begin{figure}[t]
\centering
\includegraphics[width=.5\linewidth]{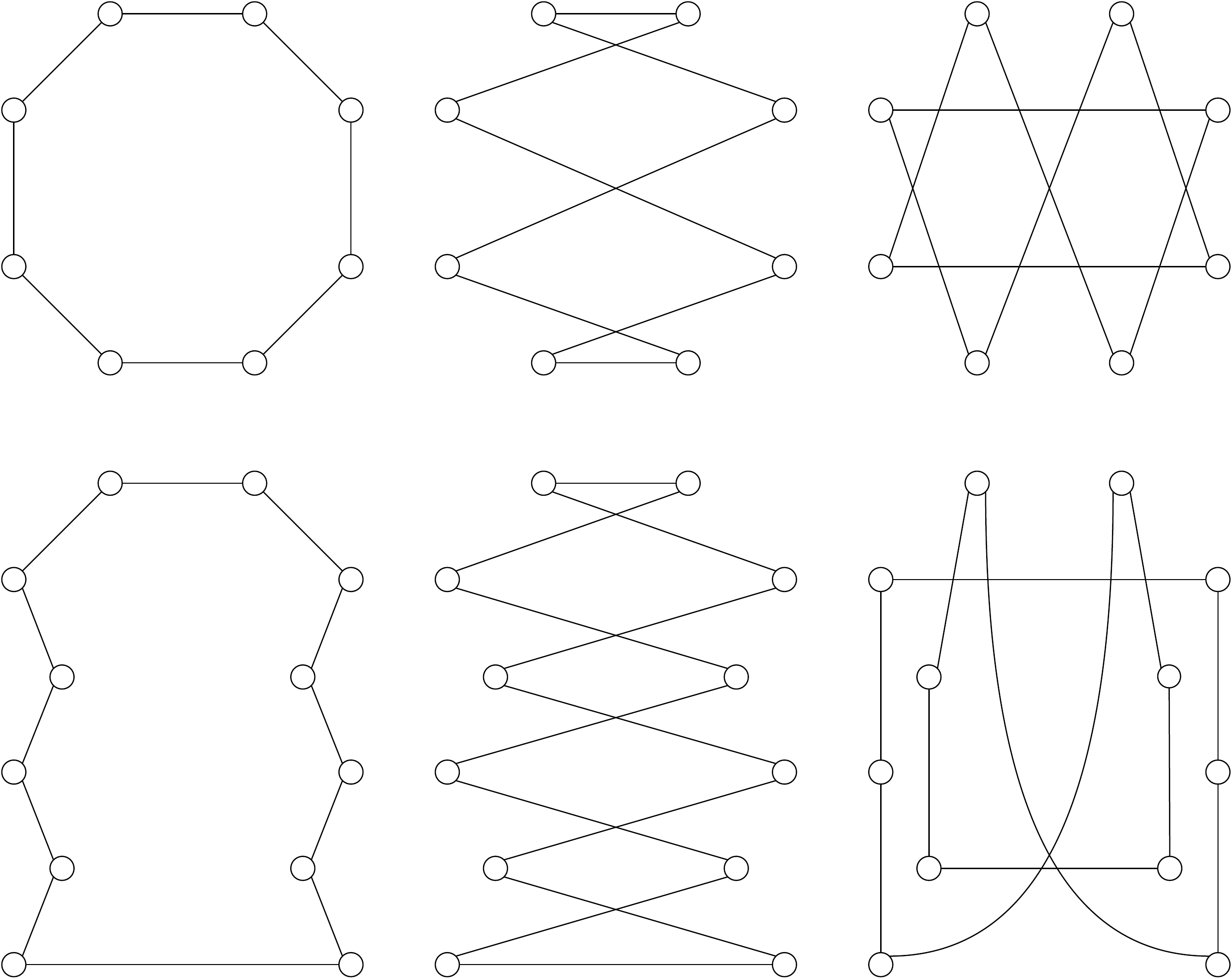}
\caption{
Examples of constructed tours with $n=8$ and $n=12$ where no single 3-opt operation on any tour improves diversity among tours in each row.}
\label{fig:3opt_example}
\end{figure}
\end{exm}

We speculate that in many cases, the insertion 3-opt suffers from its asymmetrical nature. Both 2-opt and 3-opt neighbors are each defined by two decisions. For 2-opt, the two decisions are which two edges to be exchanged, and only after both are made will the two new edges be fixed. For 3-opt, one decision determines which set of two adjacent edges to exchanged, and the other defines the third edge. Unlike 2-opt, after only one decision, one out of the three new edges is already fixed. Such limited flexibility makes it difficult to guarantee diversity improvements via 3-opt without additional assumptions about the population. In contrast, 4-opt is not subjected to this drawback, as the two decisions associated with it are symmetric. For this reason, we can derive another result for 4-opt similar to Lemma \ref{lemma:ea_step_2opt_tsp}.

\begin{lma}\label{lemma:ea_step_4opt_tsp}
Given a population of tours $P$ such that $2\leq\mu\leq\left\lfloor\frac{n+4}{8}\right\rfloor$ and $d_P\geq2$, there exists a tour $I\in P$ and a pair $(i,j)$, such that $P'=(P\setminus\{I\})\cup\{4opt(I,i,j)\}$ satisfies \eqref{eq:improvement}. Moreover, in each iteration, the Algorithm \ref{alg:ea} with 4-opt mutation on a $(\mu,\infty)$-instance makes such an improvement with probability at least \[\frac{4[(n-2)(d_P-2)+1]}{\mu n(n-5)}.\]
\end{lma}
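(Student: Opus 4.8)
The plan is to mirror the proof of Lemma~\ref{lemma:ea_step_2opt_tsp} step for step, replacing the two-edge exchange of 2-opt by the four-edge exchange of 4-opt. As before, fix a tour $I\in P$ containing a maximum-count edge and relabel its nodes so that $I=(i_1,i_2,\ldots,i_n)$ with $n(\{i_1,i_2\},P)=d_P$; such a tour is selected with probability at least $d_P/\mu$. To strictly improve diversity we must break the edge $\{i_1,i_2\}$, so I would consider the move $4opt(I,2,k)$, which relocates the node $i_2$ to position $k$. A direct inspection shows this move deletes $\{i_1,i_2\},\{i_2,i_3\},\{i_{k-1},i_k\},\{i_k,i_{k+1}\}$ and inserts the four new edges $\{i_1,i_k\}$ and $\{i_3,i_k\}$ (both incident to the relocated node $i_k$) together with $\{i_2,i_{k-1}\}$ and $\{i_2,i_{k+1}\}$ (both incident to $i_2$). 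The favorable condition is that all four inserted edges have count at most $d_P-2$ in $P$: then their counts in $P'$ stay below $d_P$, while the count of $\{i_1,i_2\}$ drops to $d_P-1$, so $P'$ satisfies \eqref{eq:improvement} exactly as in the 2-opt argument.

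Next I would count the positions $k$ realizing the favorable condition by Pigeonhole, using the identity $s(v,P)=2\mu$ at the anchor nodes $i_1$, $i_2$, and $i_3$. At $i_1$ and $i_2$ the incident edge $\{i_1,i_2\}$ already carries count $d_P$, so after subtracting it (and one further incident tour edge of count at least $1$) the remaining incident counts sum to at most $2\mu-d_P-1$; Pigeonhole then bounds the number of neighbors $v$ with $n(\{i_1,v\},P)\ge d_P-1$, and likewise at $i_2$, each by $\lfloor(2\mu-d_P-1)/(d_P-1)\rfloor$. At $i_3$ no incident max-count edge is available, giving the weaker bound $\lfloor(2\mu-2)/(d_P-1)\rfloor$. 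The subtlety is the coupling: a bad neighbor of $i_2$ sitting at position $p$ forbids $k=p-1$ and $k=p+1$, whereas a bad neighbor of $i_1$ or $i_3$ at position $p$ forbids only $k=p$. Summing the forbidden positions and subtracting them from the admissible choices of $k$ yields a lower bound $\sigma'$ on the number of favorable moves.

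The admissible range of $k$ and the total neighbor count deserve a separate check: swapping two adjacent positions, or two positions at cyclic distance $2$, fails to exchange four distinct edges, so these $2n$ degenerate pairs are excluded, leaving $n(n-5)/2$ genuine 4-opt neighbors and exactly $n-5$ choices of $k$ once position $2$ is fixed. Combining the pieces, $I$ is chosen with probability at least $d_P/\mu$ and a favorable neighbor with probability at least $\sigma'/(n(n-5)/2)$. Requiring $\sigma'\ge 1$ should pin down the population-size ceiling $\mu\le\lfloor(n+4)/8\rfloor$ of the first claim, and substituting the Pigeonhole bound for $\sigma'$ and chaining inequalities as in Lemma~\ref{lemma:ea_step_2opt_tsp} (using $d_P\ge 2$ and $d_P/(d_P-1)\ge 1$) should deliver the stated success probability $\tfrac{4[(n-2)(d_P-2)+1]}{\mu n(n-5)}$.

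The main obstacle is the Pigeonhole count. Unlike Lemma~\ref{lemma:ea_step_2opt_tsp}, where the two inserted edges are anchored symmetrically at $i_1$ and $i_2$, here four edges are distributed over three anchors with mixed dependence---two on the neighbors $i_{k-1},i_{k+1}$ of the swapped position and two on $i_k$ itself---and the anchor $i_3$ enjoys no incident max-count edge, breaking the clean symmetry. Carefully combining the forbidden positions without double counting, and tracking the floor functions, is what must be done to land precisely on the constant $(n+4)/8$ and on the claimed probability rather than on a slightly weaker bound; verifying the degenerate-swap exclusions that justify the $n(n-5)/2$ denominator is a secondary bookkeeping point.
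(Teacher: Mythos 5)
Your proposal mirrors the paper's argument almost step for step: the same move $4opt(I,2,k)$ with the same four-edge accounting, the same favorable condition (all four inserted edges of count at most $d_P-2$), the same three-anchor Pigeonhole count with the asymmetric treatment of $i_3$ (which has no incident maximum-count edge), and the same count $n(n-5)/2$ of genuine 4-opt neighbors. Carried out, your forbidden-position bookkeeping yields exactly the paper's quantity $\sigma=n-5-3\left\lfloor\frac{2\mu-d_P-1}{d_P-1}\right\rfloor-\left\lfloor\frac{2\mu-1}{d_P-1}\right\rfloor$ (your slightly sharper bound at the anchor $i_3$, $2\mu-2$ in place of $2\mu-1$, changes nothing essential), and requiring $\sigma\geq1$ does recover the ceiling $\mu\leq\left\lfloor\frac{n+4}{8}\right\rfloor$ at $d_P=2$. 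One cosmetic slip: the move does not ``relocate'' $i_2$ to position $k$ (that would be the 3-opt insertion); it exchanges $i_2$ and $i_k$, which is what your edge accounting in fact describes.

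The genuine gap is in the final constant. You count only the favorable moves that displace $i_2$, namely $4opt(I,2,k)$ for $k=5,\ldots,n-1$, so your success probability is at most $\frac{d_P}{\mu}\cdot\frac{\sigma}{n(n-5)/2}=\frac{2d_P\sigma}{\mu n(n-5)}$. Substituting $\sigma\geq\frac{(n-2)(d_P-2)+1}{d_P-1}$ and using $\frac{d_P}{d_P-1}\geq1$ delivers only $\frac{2[(n-2)(d_P-2)+1]}{\mu n(n-5)}$, which is half the claimed bound; the factor $\frac{d_P}{d_P-1}$ rescues you only when $d_P=2$ (where it equals $2$), so for $d_P>2$ the stated probability is out of reach by this route. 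The paper closes this with a symmetry step you omit: the maximum-count edge $\{i_1,i_2\}$ has two endpoints, and the moves $4opt(I,1,k)$ for $k=4,\ldots,n-2$, which displace $i_1$ instead of $i_2$, admit at least $\sigma$ favorable choices by the same argument. These neighbors are distinct from the first set (they retain the edge $\{i_2,i_3\}$, which every neighbor in the first set lacks), so $I$ has at least $2\sigma$ favorable 4-opt neighbors, giving $\frac{d_P}{\mu}\cdot\frac{4\sigma}{n(n-5)}$ and hence the constant $4$. Add this doubling and your proof goes through as the paper's does.
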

\begin{proof}
There must be $d_P$ tours $I$ in $P$ containing edge $e$ such that $n(e,P)=d_P$, let $I$ be one such tour. W.l.o.g, let $I$ be represented by a permutation of nodes $(i_1,i_2,\ldots,i_n)$ where $n(\{i_1,i_2\},P)=d_P$. The operation $4opt(I,2,k)$ trades edges $\{i_1,i_2\}$, $\{i_2,i_3\}$, $\{i_{k-1},i_k\}$, $\{i_k,i_{k+1}\}$ in $I$ for $\{i_1,i_k\}$, $\{i_3,i_k\}$, $\{i_2,i_{k-1}\}$, $\{i_2,i_{k+1}\}$. If for every such new edge $e'$ we have $n(e',P)<d_P-1$, then  $P'=(P\setminus\{I\})\cup\{4opt(I,2,k)\}$ satisfies \eqref{eq:improvement} following similar reasoning in the proof of Lemma \ref{lemma:ea_step_2opt_tsp}. We show that there is always such a position $k$. Since $k$ can only go from $5$ to $n-1$, there are $n-5$ choices of $k$. We use the fact that $s(i,P)=2\mu$ for any $i$, and that $n(\{i_2,i_3\},P)\geq1$ since $I$ uses them, thus
\begin{align}\label{eq:remaining_counts_4opt_tsp}
\sum_{k=5}^{n-1}n(\{i_2,i_{k-1}\},P)\leq2\mu-d_P-1,\quad\text{and }\quad\sum_{k=5}^{n-1}n(\{i_2,i_{k+1}\},P)\leq2\mu-d_P-1.
\end{align}
According to the Pigeonhole Principle, \eqref{eq:remaining_counts_4opt_tsp} implies there are at least $\delta$ elements $k$ from $5$ to $n-1$ such that $n(\{i_2,i_{k-1}\},P)<d_P-1$, where
\[\delta=n-5-\left\lfloor\frac{2\mu-d_P-1}{d_P-1}\right\rfloor.\]
Likewise, there are at least $\delta$ elements $k$ from $5$ to $n-1$ such that $n(\{i_2,i_{k+1}\},P)<d_P-1$. This implies that there are at least $2\delta-n+5$ elements $k$ from $5$ to $n-1$ where $n(\{i_2,i_{k-1}\},P)<d_P-1$ and $n(\{i_2,i_{k+1}\},P)<d_P-1$, which we will call condition 1. We denote the number by $\Delta$
\[\Delta=2\delta-n+5=n-5-2\left\lfloor\frac{2\mu-d_P-1}{d_P-1}\right\rfloor,\]
Using $n(\{i_1,i_n\},P)\geq1$, we similarly derive that there are at least $\delta$ element $k$ from $5$ to $n-1$ where $n(\{i_1,i_k\},P)<d_P-1$. However, we only have $n(\{i_3,i_4\},P)\geq1$, meaning there are at least $\delta'$ element $k$ from $5$ to $n-1$ such that $n(\{i_3,i_k\},P)<d_P-1$ where
\[\delta'=n-5-\left\lfloor\frac{2\mu-1}{d_P-1}\right\rfloor.\]
From this, we have that there are at least $\delta+\delta'-n+5$ elements $k$ from $5$ to $n-1$ where $n(\{i_1,i_k\},P)<d_P-1$ and $n(\{i_3,i_k\},P)<d_P-1$, which we will call condition 2. We denote the number by $\Delta'$
\[\Delta'=\delta+\delta'-n+5=n-5-\left\lfloor\frac{2\mu-d_P-1}{d_P-1}\right\rfloor-\left\lfloor\frac{2\mu-1}{d_P-1}\right\rfloor,\]
Finally, we can infer that there are at least $\sigma$ choices of $k$ such that both condition 1 and condition 2 are met, where
\begin{equation}\label{eq:sigma_4opt_tsp}
\sigma=\Delta+\Delta'-n+5=n-5-3\left\lfloor\frac{2\mu-d_P-1}{d_P-1}\right\rfloor-\left\lfloor\frac{2\mu-1}{d_P-1}\right\rfloor.
\end{equation}
We can see that $\sigma\geq1$ when
\begin{equation*}\label{eq:mu_bound_4opt_tsp}
\mu\leq\left\lfloor\frac{(n-5)(d_P-1)+3d_P+3}{8}\right\rfloor.
\end{equation*}
This proves the first part of the lemma since $d_P\geq2$. By symmetry, there are at least $\sigma$ choices of $k$ from $4$ to $n-2$ such that $P'=(P\setminus\{I\})\cup\{4opt(I,1,k)\}$ satisfies \eqref{eq:improvement}, meaning there are at least $2\sigma$ 4-opt neighbors of $I$ leading to such an improvement. In each iteration, the Algorithm \ref{alg:ea} selects a tour like $I$ with probability at least $d_P/\mu$. Since there are $n(n-5)/2$ 4-opt neighbors in total, the probability that the Algorithm \ref{alg:ea} obtains $P'$ from $P$ is at least
\[\frac{d_P}{\mu}\frac{4\sigma}{n(n-5)}\geq\frac{(n-2)(d_P-2)+1}{d_P-1}\frac{4d_P}{\mu n(n-5)}\geq\frac{4[(n-2)(d_P-2)+1]}{\mu n(n-5)},\]
where the first inequality follows from \eqref{eq:sigma_4opt_tsp} and the upper bound of $\mu$.
\end{proof}

Like in Lemma \ref{lemma:ea_step_2opt_tsp}, only one out of many favorable scenarios is considered in Lemma \ref{lemma:ea_step_4opt_tsp}, so the lower bound is strict. The range of the population size is smaller to account for the fact that the condition for such a scenario is stronger than the one in Lemma \ref{lemma:ea_step_2opt_tsp}. With these results, we derive run-time results for 2-opt and 4-opt, relying on the longest possible path from zero diversity to the optimum.

\begin{thm}\label{theo:runtime_tsp}
On a $(\mu,\infty)$-instance based on any STSP instance with $n\geq6$ nodes, and $\mu\geq2$, the Algorithm \ref{alg:ea} obtains a $\mu$-population with maximum diversity within expected time $\mathcal{O}(\mu^2n^3)$ if
\begin{itemize}
\item it uses 2-opt mutation and $\mu\leq\left\lfloor\frac{n+2}{4}\right\rfloor$,
\item it uses 4-opt mutation and $\mu\leq\left\lfloor\frac{n+4}{8}\right\rfloor$.
\end{itemize}
\end{thm}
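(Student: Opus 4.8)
The plan is to combine the per-step improvement guarantees of Lemma~\ref{lemma:ea_step_2opt_tsp} (for 2-opt) and Lemma~\ref{lemma:ea_step_4opt_tsp} (for 4-opt) with a potential argument based on the pair $(d_P,c_P)$, organizing the run into \emph{phases} indexed by the current value of $d_P$. First I would argue that under $\mu\leq\left\lfloor\frac{n+2}{4}\right\rfloor\leq\left\lfloor\frac{n-1}{2}\right\rfloor$ the optimum of $\mathcal{N}$ is attained exactly when $d_P=1$: since the total edge count $\sum_{e\in E}n(e,P)=\mu n\leq m$, the sorted count vector cannot improve on $(1,\ldots,1,0,\ldots,0)$, and a population realizing $d_P=1$ exists by the Hamiltonian decomposition result already cited. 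Hence every sub-optimal population has $d_P\geq2$, so the relevant lemma applies at every step before termination. I would also note that the elitist survival selection of Algorithm~\ref{alg:ea} keeps $\mathcal{N}$ non-increasing, and that whenever the specific improving offspring of the lemma is generated, the retained population is at least as good (lexicographically) as the lemma's $P'$; thus each such event strictly decreases $(d_P,c_P)$ and occurs with the stated probability, which depends only on $d_P$.

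Next I would bound the number of improving steps per phase. Because each edge with count $d_P$ contributes $d_P$ to the fixed total $\mu n$, we have $c_P\leq\mu n/d_P$. Within the phase where $d_P=d$, every improving step either reduces $c_P$ by at least one or drops $d_P$ below $d$; therefore at most $c_P\leq\mu n/d$ improving steps are spent at level $d$. Since the success probability throughout this phase is at least $p(d)=\frac{2[(n-1)(d-2)+1]}{\mu n(n-3)}$ for 2-opt (and the probability bound is valid for \emph{all} $c_P$ in the phase, as it depends on $d_P$ alone), the expected number of iterations in phase $d$ is at most $\frac{\mu n/d}{p(d)}$, and summing over $d=2,\ldots,\mu$ upper-bounds the total expected run-time.

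The crux is evaluating
\[
\sum_{d=2}^{\mu}\frac{\mu n}{d}\cdot\frac{\mu n(n-3)}{2\left[(n-1)(d-2)+1\right]}
\]
without losing the target exponent. I would isolate the $d=2$ term, which equals $\frac{\mu^2 n^2(n-3)}{4}=\mathcal{O}(\mu^2 n^3)$, and bound the tail $d\geq3$ using $(n-1)(d-2)+1\geq(n-1)(d-2)$, giving $\frac{\mu^2 n^2(n-3)}{2(n-1)}\sum_{d\geq3}\frac{1}{d(d-2)}=\mathcal{O}(\mu^2 n^2)$ via the telescoping value $\sum_{d\geq3}\frac{1}{d(d-2)}=3/4$. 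This yields the $\mathcal{O}(\mu^2 n^3)$ bound. The 4-opt case is identical, replacing $p(d)$ by $\frac{4[(n-2)(d-2)+1]}{\mu n(n-5)}$ from Lemma~\ref{lemma:ea_step_4opt_tsp} and using $\mu\leq\left\lfloor\frac{n+4}{8}\right\rfloor$; its $d=2$ term is again $\frac{\mu^2 n^2(n-5)}{4}=\mathcal{O}(\mu^2 n^3)$ and the tail is lower order.

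I expect the main obstacle to lie in the \emph{tightness} of the step count rather than in the drift itself: the naive bounds $c_P\leq m$, or (total improvements)$\times$(worst-case $1/p$), overshoot to $\mathcal{O}(\mu n^4)$ or $\mathcal{O}(\mu^2 n^3\log\mu)$ respectively. The essential observation is that the slowest level, $d_P=2$, entails at most $\mu n/2$ improving steps, while every higher level contributes only lower-order cost, so the entire expected run-time is effectively governed by this single phase; making this precise requires the count bound $c_P\leq\mu n/d$ together with the telescoping estimate above. A secondary point to verify is simply that elitism never increases $(d_P,c_P)$, so the phase decomposition is well defined.
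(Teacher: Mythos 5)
Your proposal is correct and follows essentially the same argument as the paper's proof: phase decomposition by $d_P$, the bound $c_P\leq\mu n/d_P$ on improving steps per phase, and the sum $\sum_{d=2}^{\mu}\frac{\mu n}{d}\cdot\frac{\mu n(n-3)}{2[(n-1)(d-2)+1]}=\mathcal{O}(\mu^2n^3)$ (and its 4-opt analogue) are exactly what the paper writes. Your explicit evaluation of the sum (isolating the dominant $d=2$ term and telescoping the tail) and your remark that elitist selection preserves the phase structure merely spell out details the paper leaves implicit.
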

\begin{proof}
In the worst case, the algorithm begins with $d_P=\mu$ and $c_P=n$. At any time, we have $c_P\leq\mu n/d_P$. Moreover, in the worst case, each improvement either reduces $c_P$ by $1$, or reduces $d_P$ by $1$ and sets $c_P$ to its maximum value. With $2\leq\mu\leq\left\lfloor\frac{n-1}{2}\right\rfloor$, the maximum diversity is achieved iff $d_P=1$ as shown in \cite{Do2020}. According to Lemma \ref{lemma:ea_step_2opt_tsp}, the expected run time Algorithm \ref{alg:ea} requires to reach maximum diversity when using 2-opt mutation is at most
\[\sum_{j=2}^{\mu}\frac{\mu n}{j}\frac{\mu n(n-3)}{2[(n-1)(j-2)+1]}=\mathcal{O}(\mu^2n^3).\]
On the other hand, Lemma \ref{lemma:ea_step_4opt_tsp} implies that when $2\leq\mu\leq\left\lfloor\frac{n+4}{8}\right\rfloor$, Algorithm \ref{alg:ea} with 4-opt mutation needs at most the following expected run time
\[\sum_{j=2}^{\mu}\frac{\mu n}{j}\frac{\mu n(n-5)}{2[(n-2)(j-2)+1]}=\mathcal{O}(\mu^2n^3).\]
\end{proof}

As expected, the simple algorithm requires only quadratic expected run-time to achieve optimal diversity from any starting population of sufficiently small size. The quadratic scaling with $\mu$ comes from two factors. One is the fact that Algorithm \ref{alg:ea} needs to select the ``correct'' tour to mutate out of $\mu$ tours. The other is the fact that up to $\mu-1$ tours need to be modified to achieve the optimum, and only one is modified in each step. The cubic scaling with $n$ comes from the quadratic number of possible mutation operations, and the number of edges to modify in each tour. Additionally, most of the run-time is spent on the ``last stretch'' when reducing $d_P$ from 2 to 1, as the rest only takes up $\mathcal{O}(\mu^2n^2)$ expected number of steps.

\subsection{ATSP}\label{sec:atsp}

For ATSP, it is clear that no two distinct directed tours are less than 3 edges apart. In this case, we consider 3-opt and 4-opt, examples of which are illustrated in Figure \ref{fig:undirected_opt_example}. Note that for any set of 3 or 4 edges removed, there is only one way to reconnect the segments, and that these segments can be empty (i.e. having only one node). Let $3opt(I,i,j,k)$ and $4opt(I,i,j,k,h)$ denote a 3-opt neighbor and 4-opt neighbor of $I$, respectively, where the parameters are the ending positions of the segments in ascending order. Figure \ref{fig:undirected_opt_example} then illustrates $3opt(I,1,3,6)$ and $4opt(I,1,3,5,7)$, where $I=(1,2,3,4,5,6,7,8)$. Once the parameters are fixed, the same edge exchange occurs regardless of which two segments are swapped in the permutation. We can see that for any directed tour containing $n$ edges, there are $\binom{n}{3}$ distinct 3-opt neighbors, and $\binom{n}{4}$ distinct 4-opt neighbors; these are also implied in \cite{Kanellakis1980}. Naturally, we assume $n\geq3$ for 3-opt, and $n\geq4$ for 4-opt. Lastly, here we use the same definitions of $d_P$ and $c_P$ in Section \ref{sec:tsp}, only with directed edges instead.

\begin{figure}[t]
\centering
\includegraphics[width=.65\linewidth]{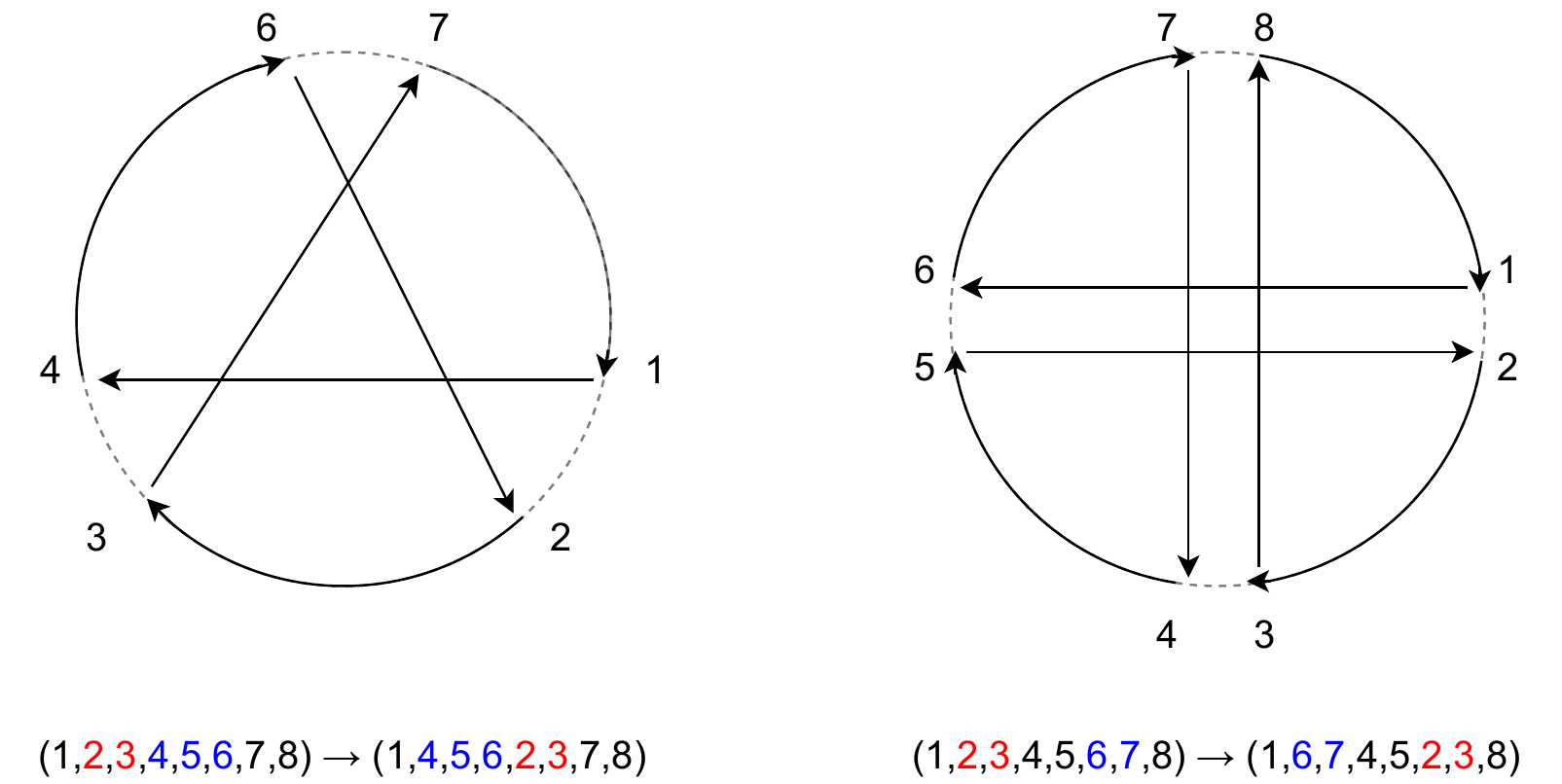}
\caption{
Examples of 3-opt and 4-opt, and their corresponding actions on the permutation.}
\label{fig:undirected_opt_example}
\end{figure}

We first prove that in ATSP, the achievable maximum diversity takes a similar form as in STSP, by relying on the corresponding proof in \cite{Do2020}.

\begin{cor}\label{theo:atsp_max_div}
Given $n\geq3$ and $\mu\geq1$, there exists a $\mu$-size population $P$ of tours in a complete directed graph $G=(V,E)$ where $|V|=n$ such that
\begin{equation}\label{eq:atsp_max_div}
\max_{e\in E}n(e,P)-\min_{e\in E}n(e,P)\leq1.
\end{equation}
\end{cor}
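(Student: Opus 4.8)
The plan is to reduce the claim to the existence of a decomposition of the complete directed graph $K_n^*$ (all $n(n-1)$ arcs) into $n-1$ arc-disjoint directed Hamiltonian cycles, and then to realize an arbitrary $\mu$ by stacking copies of such a decomposition. Writing the decomposition as $\{H_1,\dots,H_{n-1}\}$, every arc lies in exactly one $H_i$, so I would build the $\mu$ tours as $q$ copies of each $H_i$ together with the extra cycles $H_1,\dots,H_r$, where $\mu=q(n-1)+r$ with $0\le r<n-1$. Every arc then receives a count of either $q$ or $q+1$, which is exactly \eqref{eq:atsp_max_div}. In the special case $\mu\le n-1$ one simply takes $\mu$ of the cycles, so that every arc count is $0$ or $1$; this mirrors the STSP fact recalled from \cite{Do2020} that $\left\lfloor\frac{n-1}{2}\right\rfloor$ edge-disjoint Hamiltonian cycles already give $d_P=1$.

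To supply the required directed decomposition I would lean on the remark that each undirected STSP tour corresponds to two arc-disjoint directed tours. Starting from a Hamiltonian decomposition of the complete undirected graph $K_n$ (Theorem 1 in \cite{Alspach1990}, as used in \cite{Do2020}) and orienting each undirected Hamiltonian cycle in both directions yields two arc-disjoint directed Hamiltonian cycles per undirected cycle, consuming the two opposite arcs of every edge. For odd $n$, $K_n$ decomposes into exactly $(n-1)/2$ Hamiltonian cycles, so this orientation step produces $n-1$ arc-disjoint directed cycles covering all $n(n-1)$ arcs, which is precisely the decomposition needed.

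The main obstacle is the even-$n$ case. There, the undirected decomposition of $K_n$ leaves a leftover perfect matching alongside $(n-2)/2$ Hamiltonian cycles, and orienting that matching in both directions yields $n/2$ directed $2$-cycles rather than a Hamiltonian cycle, so the simple orientation argument cannot by itself deliver the final directed cycle. To close this gap I would invoke the classical directed Hamiltonian decomposition result (Tillson's theorem), which guarantees that $K_n^*$ decomposes into $n-1$ arc-disjoint directed Hamiltonian cycles for every $n$ apart from $n=4$ and $n=6$, with those two exceptional orders checked by hand; reconciling them with the universally quantified statement is the delicate part.

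A last point to settle is whether $P$ must consist of distinct tours. For $\mu\le n-1$ the selected cycles are pairwise distinct, whereas for larger $\mu$ the copies construction repeats tours; if distinctness is demanded one can instead assemble $P$ from several genuinely different decompositions, or observe that \eqref{eq:atsp_max_div} constrains only arc multiplicities and is insensitive to treating $P$ as a multiset. I expect this to be routine bookkeeping, with the real work concentrated in securing the directed decomposition for even $n$.
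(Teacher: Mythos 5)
Your reduction and the stacking argument are sound: for $\mu=q(n-1)+r$, taking $q$ copies of each cycle in a Hamiltonian decomposition of $K_n^*$ plus $r$ further cycles gives every arc count $q$ or $q+1$, and for odd $n$ the double orientation of the Alspach decomposition of $K_n$ does produce the required $n-1$ arc-disjoint directed Hamiltonian cycles; for even $n\geq 8$ Tillson's theorem supplies them as well. The genuine gap is your treatment of $n=4$ and $n=6$. Tillson's theorem does not leave these as open cases to be ``checked by hand'' --- it asserts that $K_4^*$ and $K_6^*$ have \emph{no} decomposition into directed Hamiltonian cycles. Worse, this gap cannot be closed at all, because the statement itself fails at those orders for certain $\mu$: for $(n,\mu)=(4,3)$ (and likewise $(6,5)$) the $\mu$ tours occupy $\mu n=n(n-1)$ arc-slots, so the average arc count is exactly $1$, and $\max-\min\leq 1$ then forces every arc count to equal $1$; such a population would be precisely a Hamiltonian decomposition of $K_n^*$, which Tillson rules out. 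So what you defer as ``routine bookkeeping'' is in fact an impossibility, and your proof is complete only for $n\notin\{4,6\}$.

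For comparison, the paper takes a different route that avoids directed decompositions entirely: it builds $\lceil\mu/2\rceil$ undirected tours with balanced edge counts via the construction of \cite{Do2020}, replaces each undirected tour by its two orientations (so each arc inherits the count of its underlying edge), and, when $\mu$ is odd, deletes one orientation of the last-added undirected tour, invoking the property that all edges of that tour sit at the maximum count. This buys independence from Tillson's theorem and handles all parities of $\mu$ uniformly, but it does not rescue the exceptional orders either: for $n=4$, any two distinct Hamiltonian cycles of $K_4$ share exactly two edges, so the last-added undirected tour can never have all its edges at maximum count, and the deletion step breaks --- consistent with the fact, which your counting shows, that the claim is false at $(4,3)$. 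The obstruction you ran into is therefore intrinsic to the statement rather than an artifact of your approach; a correct version must exclude, or separately characterize the feasible $\mu$ for, $n=4$ and $n=6$.
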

\begin{proof}
We prove by construction. First, we construct a population $P'$ of $\left\lceil\frac{\mu}{2}\right\rceil$ tours in a complete undirected graph $G'=(V,E')$ as specified in the proof of Theorem 1 in \cite{Do2020}, while keeping track of which tours are in $L$. Then, for each undirected tour $I\in P'$, we add two corresponding directed tours to $P$, obtained by imposing directions on $I$. Finally, if $\mu\equiv1\Mod{2}$, then remove a tour from $P$ corresponding to the last added tour into $P'$

For any $h=\{i,j\}\in E'$, we have $n(e,P')=n((i,j),P)=n((j,i),P)$ by the second step of the construction. Furthermore, if $h$ is in the last added tour in $P'$, then $n(e,P)=\max_{e\in E'}n(e,P')$ according to the proof in \cite{Do2020}. Therefore, \eqref{eq:atsp_max_div} holds after the last step.
\end{proof}

With maximum diversity well-defined, we can determine if it is reached with population $P$ using only information from $\mathcal{N}(P)$. Therefore, we can show the guarantee of strict diversity improvement with a single 3-opt or 4-opt on a tour in some sub-optimal population, and the probability that Algorithm \ref{alg:ea} makes such an improvement, similar to Lemma \ref{lemma:ea_step_2opt_tsp} and \ref{lemma:ea_step_4opt_tsp}.

\begin{lma}\label{lemma:ea_step_3opt_atsp}
Given a population of directed tours $P$ such that $2\leq\mu\leq\left\lfloor\frac{n+2}{3}\right\rfloor$ and $d_P\geq2$, there exists a tour $I\in P$ and a triplet $(i,j,k)$ where $1\leq i<j<k\leq n$, such that $P'=(P\setminus\{I\})\cup\{3opt(I,i,j,k)\}$ satisfies \eqref{eq:improvement}. Moreover, in each iteration, the Algorithm \ref{alg:ea} with 3-opt mutation on a $(\mu,\infty)$-instance makes such an improvement with probability at least
\[\frac{3[n(d_P-2)+1][(n+1)(d_P-2)+2]}{\mu n(n-1)(n-2)(d_P-1)}.\]
\end{lma}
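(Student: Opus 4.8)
The plan is to mirror the structure of the proofs of Lemma \ref{lemma:ea_step_2opt_tsp} and Lemma \ref{lemma:ea_step_4opt_tsp}, adapted to directed edges and to the three-parameter 3-opt neighborhood. First I would pick a tour $I\in P$ containing a directed edge of maximal count; writing $I=(i_1,\ldots,i_n)$, I assume without loss of generality that $n((i_1,i_2),P)=d_P$. I would then restrict attention to the 3-opt moves whose first cut sits at position $1$, i.e. $3opt(I,1,j,k)$ with $2\le j<k\le n$, since exactly these delete the maximal edge $(i_1,i_2)$. A direct inspection of the unique non-identity directed reconnection shows that $3opt(I,1,j,k)$ trades the removed edges $(i_1,i_2),(i_j,i_{j+1}),(i_k,i_{k+1})$ for the added edges $(i_1,i_{j+1})$, $(i_k,i_2)$ and $(i_j,i_{k+1})$. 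As in Lemma \ref{lemma:ea_step_2opt_tsp}, if all three added edges have count strictly below $d_P-1$ in $P$, then $P'=(P\setminus\{I\})\cup\{3opt(I,1,j,k)\}$ satisfies \eqref{eq:improvement}: deleting $I$ drops the count of $(i_1,i_2)$ to $d_P-1$, while none of the incremented edges can reach $d_P$.

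The counting would rely on the directed analogue of the identity $s(i,P)=2\mu$ used in Section~\ref{sec:tsp}: for every node the total out-count and the total in-count each equal $\mu$, since each tour contributes exactly one out-edge and one in-edge at that node. Reserving the maximal edge $(i_1,i_2)$, this gives that the out-edges $(i_1,i_{j+1})$ over admissible targets sum to at most $\mu-d_P$, and likewise the in-edges $(i_k,i_2)$ sum to at most $\mu-d_P$; reserving the tour edge $(i_j,i_{j+1})$ gives that the out-edges $(i_j,i_{k+1})$ from a fixed $i_j$ sum to at most $\mu-1$. Pigeonhole then bounds the number of ``bad'' positions for each family by $m_1=\lfloor\frac{\mu-d_P}{d_P-1}\rfloor$ (for the first two) and $m_2=\lfloor\frac{\mu-1}{d_P-1}\rfloor$ (for the third).

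The key difficulty, and the point where this proof departs from the STSP lemmas, is that the third added edge $(i_j,i_{k+1})$ depends on \emph{both} parameters, so the set of good moves does not factor into independent choices of $j$ and $k$. I would handle this with a two-stage count: fix a position $j$ for which $(i_1,i_{j+1})$ has low count (at least $(n-2)-m_1$ such $j$ exist), and for each such $j$ count the admissible $k\in\{j+1,\ldots,n\}$ for which both $(i_k,i_2)$ and $(i_j,i_{k+1})$ have low count, which is at least $(n-j)-m_1-m_2$. Summing $\max\{0,(n-j)-m_1-m_2\}$ over the worst-case placement of the good $j$'s (pushing them to the largest indices, where the inner range is smallest) turns the sum into a triangular number $\tfrac12(g_1-r)(g_1-r+1)$ with $g_1-r=(n-2)-2m_1-m_2$. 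The clean form of the stated bound then hinges on the arithmetic identity $\frac{(n+1)(d_P-2)+2}{d_P-1}=\frac{n(d_P-2)+1}{d_P-1}+1$, together with the population-size hypothesis $\mu\le\lfloor\frac{n+2}{3}\rfloor$, which I would use exactly as in Lemma \ref{lemma:ea_step_2opt_tsp} to show $g_1-r\ge\frac{n(d_P-2)+1}{d_P-1}$. This yields $\sigma\ge\frac{[n(d_P-2)+1][(n+1)(d_P-2)+2]}{2(d_P-1)^2}$ good moves (in particular $\sigma\ge1$ for the existence claim), and the probability bound follows from selecting $I$ with probability at least $d_P/\mu$, choosing one of the $\binom{n}{3}$ neighbors uniformly, and the crude simplification $d_P/(d_P-1)\ge1$.

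The main obstacle I anticipate is precisely this coupled third edge: producing the exact product-of-two-linear-factors numerator requires the worst-case triangular summation rather than a direct product over independent parameters, and care is needed to verify that the reserved edges are correctly excluded so that the pigeonhole constants come out right. In particular I must check that the tour edge $(i_j,i_{j+1})$ is never among the targets of the third family (which holds because $k>j$ forces the target index $k+1\ge j+2$), so that its count of at least $1$ may legitimately be subtracted to obtain $m_2$ rather than $\lfloor\frac{\mu}{d_P-1}\rfloor$; without this refinement the bound $g_1-r\ge\frac{n(d_P-2)+1}{d_P-1}$ would fail.
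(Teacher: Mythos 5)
Your proposal is correct and follows essentially the same route as the paper's proof: restricting to moves $3opt(I,1,j,k)$ that delete the maximal edge, applying the pigeonhole bounds $\left\lfloor\frac{\mu-d_P}{d_P-1}\right\rfloor$ and $\left\lfloor\frac{\mu-1}{d_P-1}\right\rfloor$ to the three families of added edges, and resolving the coupled third edge via a worst-case triangular summation yielding $\sigma\geq\frac{1}{2}\left(n-2-2m_1-m_2\right)\left(n-1-2m_1-m_2\right)$ before the same final algebra. The only difference is cosmetic: you fix $j$ and count admissible $k$'s, whereas the paper fixes $k$ and counts good $j$'s below it, a symmetric variation that produces the identical bound.
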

\begin{proof}
There must be $d_P$ tours $I$ in $P$ containing edge $e$ such that $n(e,P)=d_P$, let $I$ be one such tour. W.l.o.g, let $I$ be represented by a permutation of nodes $(i_1,i_2,\ldots,i_n)$ where $n((i_1,i_2),P)=d_P$. For all $2\leq j<k\leq n$, the operation $3opt(I,1,j,k)$ trades edges $(i_1,i_2)$, $(i_{j},i_{j+1})$, $(i_{k},i_{k+1})$ in $I$ for $(i_1,i_{j+1})$, $(i_{k},i_2)$, $(i_{j},i_{k+1})$. If for each new edge $e'$, $n(e',P)<d_P-1$, then $P'=(P\setminus\{I\}|\cup\{3opt(I,1,j,k)\}$ satisfies \eqref{eq:improvement}. We have the following equations
\begin{equation}\label{eq:sum_count_atsp}
\forall h\in[n],\sum_{j\in[n]}n((h,j),P)=\sum_{j\in[n]}n((j,h),P)=|P|=\mu.
\end{equation}
Using the Pigeonhole Principle with \eqref{eq:sum_count_atsp}, we have at least $\delta$ position $j$ from $2$ to $n-1$ such that $n((i_{1},i_{j+1}),P)<d_P-1$ where
\[\delta=n-2-\left\lfloor\frac{\mu-d_P}{d_P-1}\right\rfloor.\]
Likewise, there are at least $\delta$ positions $k$ from $3$ to $n$ such that $n((i_{k},i_{2}),P)<d_P-1$. Let the sets of such positions of $j$ and $k$ be $Q$ and $W$, respectively, $m_j=\min Q$ and $m_k=\max W$, we have that the worst-case is $m_k=m_j+2\delta-n+2$, which occurs when $Q=\{n-\delta,\ldots,n-1\}$ and $W=\{3,\ldots,\delta+2\}$. For each position $h\in W\cap(m_j,m_k]$, we define $S_h=Q\cap[m_j,h)$. We can see that for any $h\in W\cap(m_j,m_k]$, $|S_h|\geq h-m_j$. The number of choices of $j$ and $k$ in $Q$ and $W$, respectively, such that $j<k$ and $n((i_{j},i_{k+1}),P)<d_P-1$ is at least
\begin{equation}\label{eq:sigma_3opt_atsp}
\sigma=\sum_{h\in W\cap(m_j,m_k]}\max\left\lbrace|S_h|-\left\lfloor\frac{\mu-1}{d_P-1}\right\rfloor,0\right\rbrace\geq\sum_{h=1}^{2\delta-n+2}\max\left\lbrace h-\left\lfloor\frac{\mu-1}{d_P-1}\right\rfloor,0\right\rbrace.
\end{equation}
This follows from the Pigeonhole Principle, \eqref{eq:sum_count_atsp}, and $n((i_{k},i_{k+1}),P)\geq1$. We have $\sigma\geq1$ when
\begin{equation*}\label{eq:mu_bound_3opt_atsp}
\mu\leq\left\lfloor\frac{(n-2)(d_P-1)+2d_P}{3}\right\rfloor.
\end{equation*}
This proves the first part of the lemma since $d_P\geq2$. In each iteration, the Algorithm \ref{alg:ea} selects a directed tour like $I$ with probability at least $d_P/\mu$. There are at least $\sigma$ different 3-opt neighbors on such a tour to produce $P'$. Since there are $\binom{n}{3}$ 3-opt neighbors in total, the probability that the Algorithm \ref{alg:ea} obtains $P'$ from $P$ is at least
\begin{align*}
\frac{d_P}{\mu}\frac{\sigma}{\binom{n}{3}}&\geq\frac{d_P}{\mu\binom{n}{3}}\frac{1}{2}\left(n-2-2\left\lfloor\frac{\mu-d_P}{d_P-1}\right\rfloor-\left\lfloor\frac{\mu-1}{d_P-1}\right\rfloor\right)\left(n-1-2\left\lfloor\frac{\mu-d_P}{d_P-1}\right\rfloor-\left\lfloor\frac{\mu-1}{d_P-1}\right\rfloor\right)\\
&\geq\frac{d_P}{\mu\binom{n}{3}}\frac{[n(d_P-2)+1][(n+1)(d_P-2)+2]}{2(d_P-1)^2}\geq\frac{3[n(d_P-2)+1][(n+1)(d_P-2)+2]}{\mu n(n-1)(n-2)(d_P-1)},
\end{align*}
where the first inequality follows from \eqref{eq:sigma_3opt_atsp} and the second from applying the upper bound of $\mu$.
\end{proof}

We use similar approaches to derive the result for 4-opt. Some complicated expressions are encapsulated in asymptotic notations for brevity's sake.

\begin{lma}\label{lemma:ea_step_4opt_atsp}
Given a population of directed tours $P$ such that $2\leq\mu\leq\left\lfloor\frac{n}{3}\right\rfloor$ and $d_P\geq2$, there exists a tour $I\in P$ and a quadruplet $(i,j,k,h)$ where $1\leq i<j<k<h\leq n$, such that $P'=(P\setminus\{I\})\cup\{4opt(I,i,j,k,h)\}$ satisfies \eqref{eq:improvement}. Moreover, in each iteration, the Algorithm \ref{alg:ea} with 4-opt mutation on a $(\mu,\infty)$-instance makes such an improvement with probability lower-bounded by $\Omega(1/\mu n^3)$ when $d_P=2$ and $\Omega(d_P/\mu n)$ when $d_P>2$.
\end{lma}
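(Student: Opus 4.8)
The plan is to mirror the structure of the proofs of Lemmas~\ref{lemma:ea_step_2opt_tsp}--\ref{lemma:ea_step_3opt_atsp}: select a tour carrying a maximum-count edge, perform the 4-opt that removes this edge, and guarantee that the four edges traded in are all ``fresh enough'' that none can climb to count $d_P$. Concretely, I would choose a tour $I=(i_1,\ldots,i_n)$ with $n((i_1,i_2),P)=d_P$ and restrict attention to the moves $4opt(I,1,j,k,h)$ with $2\le j<k<h\le n$, i.e.\ I fix the first cut at position $1$ so that the maximum-count edge $(i_1,i_2)$ is always among the four removed edges. Writing out the unique (double-bridge) reconnection, the four edges traded in are $(i_1,i_{k+1})$, $(i_k,i_2)$, $(i_j,i_{h+1})$ and $(i_h,i_{j+1})$. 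As before, if each of these has count strictly below $d_P-1$ in $P$, then removing $I$ (which drops the count of $(i_1,i_2)$ from $d_P$ to $d_P-1$) and inserting the offspring (which lifts each traded-in edge to at most $d_P-1$) yields a $P'$ satisfying \eqref{eq:improvement}. The whole task thus reduces to lower-bounding the number $\sigma$ of admissible triples $(j,k,h)$ for which all four traded-in edges lie below $d_P-1$.

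The structural observation I would exploit is that the double bridge splits into two independent ``crossing'' pairs: one pair $(i_1,i_{k+1}),(i_k,i_2)$ anchored at the removed maximum edge and governed solely by $k$, and one pair $(i_j,i_{h+1}),(i_h,i_{j+1})$ which is free and couples $j$ and $h$. For the anchored pair I would bound bad positions exactly as in Lemma~\ref{lemma:ea_step_3opt_atsp}: applying the row/column identities \eqref{eq:sum_count_atsp} at $i_1$ (out-edges) and $i_2$ (in-edges) and subtracting the known weight $d_P$ of $(i_1,i_2)$, the Pigeonhole Principle leaves at least $n-O(\lfloor(\mu-d_P)/(d_P-1)\rfloor)$ ``good'' values of $k$. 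For the coupled pair I would argue per fixed $j$: the identities at $i_j$ and $i_{j+1}$ bound the number of $h$ failing either cross-constraint by $O(\lfloor(\mu-1)/(d_P-1)\rfloor)$, leaving a guaranteed supply of good $h$.

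The main work, and the main obstacle, is stitching these bounds together under the ordering $j<k<h$. Unlike the single cross-edge in the 3-opt proof, which needed only a one-dimensional sum with a $\max\{\cdot,0\}$ truncation as in \eqref{eq:sigma_3opt_atsp}, here a good $k$ must sit strictly between a cross-good pair $(j,h)$, so I expect to set up a nested (two-dimensional) Pigeonhole sum — summing, over good $k$, the number of cross-good pairs $(j,h)$ straddling $k$ — one order higher than \eqref{eq:sigma_3opt_atsp}. Evaluating this sum is where the two regimes emerge. When $d_P>2$ the bad sets have size $O(\mu/(d_P-1))$ bounded away from the full index range, so a constant fraction of the $\Theta(n^3)$ moves with first cut at $1$ survive, giving $\sigma=\Omega(n^3)$; dividing by $\binom{n}{4}$ and multiplying by the selection probability $d_P/\mu$ yields $\Omega(d_P/\mu n)$. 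When $d_P=2$ the requirement degenerates to ``the traded-in edge is absent'' and the floor terms $\lfloor(\mu-\cdot)/(d_P-1)\rfloor=\mu-O(1)$ grow to $\Theta(n)$, so near the boundary $\mu=\lfloor n/3\rfloor$ the surviving count collapses; the delicate point is to show the truncated double sum still leaves $\Omega(n)$ admissible triples (and at least one, establishing the existence claim for the stated range of $\mu$), which gives the weaker $\Omega(1/\mu n^3)$ bound. I expect controlling this collapse — precisely, showing that an adversarial population cannot place the good-$k$ set and the cross-good pairs so as to avoid each other entirely — to be the crux, and the reason the analysis, like the statement, separates $d_P=2$ from $d_P>2$.
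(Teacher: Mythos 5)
Your plan follows the paper's proof essentially step for step: anchoring the first cut at the max-count edge $(i_1,i_2)$, requiring all four traded-in edges $(i_1,i_{k+1})$, $(i_k,i_2)$, $(i_j,i_{h+1})$, $(i_h,i_{j+1})$ to have count below $d_P-1$, splitting them into the $k$-governed anchored pair and the $(j,h)$-coupled pair, bounding bad positions via the Pigeonhole Principle with \eqref{eq:sum_count_atsp}, and summing over good $k$ the number of straddling cross-good pairs — which is exactly the paper's $\sigma=\sum_{l\in S}\sigma_l$ decomposition, with the same two regimes and the same resulting bounds $\Omega(d_P/\mu n)$ and $\Omega(1/\mu n^3)$. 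The one step you defer as the ``crux'' (showing an adversarial population cannot separate the good-$k$ set from the positions admitting straddling cross-good pairs when $d_P=2$) is precisely what the paper resolves by exploiting the piecewise-quadratic shape of $\sigma_l$, which yields $\sigma\geq(2n-4)/8=\Omega(n)$ at $\mu=\lfloor n/3\rfloor$, matching your predicted $\Omega(n)$ count of admissible triples.
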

\begin{proof}
There must be $d_P$ tours $I$ in $P$ containing edge $e$ such that $n(e,P)=d_P$, let $I$ be one such tour. W.l.o.g, let $I$ be represented by a permutation of nodes $(i_1,i_2,\ldots,i_n)$ where $n((i_1,i_2),P)=d_P$. For all $2\leq j<k<h\leq n$, the operation $4opt(I,1,j,k,h)$ trades edges $(i_1,i_2)$, $(i_{j},i_{j+1})$, $(i_{k},i_{k+1})$, $(i_{h},i_{h+1})$ in $I$ for $(i_1,i_{k+1})$, $(i_{h},i_{j+1})$, $(i_{k},i_{2})$, $(i_{j},i_{h+1})$. If for each new edge $e'$, $n(e',P)<d_P-1$, then $P'=(P\setminus\{I\})\cup\{4opt(I,1,j,k,h)\}$ satisfies \eqref{eq:improvement}. According to the Pigeonhole Principle, \eqref{eq:sum_count_atsp} and the fact $n((i_1,i_2),P)\geq1$, there are at least $\delta$ positions $k$ from $3$ to $n-1$ such that $n((i_{k},i_{2}),P)<d_P-1$ (condition 1), where
\[\delta=n-3-\left\lfloor\frac{\mu-d_P}{d_P-1}\right\rfloor.\]
Likewise, there are at least $\delta$ positions $k$ from $3$ to $n-1$ such that $n((i_{1},i_{k+1}),P)<d_P-1$ (condition 2). Let $S$ be the set of positions of $k$ such that conditions 1 and 2 hold, we have
\[|S|\geq2\delta-n+3=n-3-2\left\lfloor\frac{\mu-d_P}{d_P-1}\right\rfloor.\]
For each $k\in S$, we use the same argument to infer that there are at least $\delta_k$ positions $h\in(k,n]$ for each $j\in[2,k)$ such that $n((i_{h},i_{j+1}),P)<d_P-1$ (condition 3) and $n((i_{j},i_{h+1}),P)<d_P-1$ (condition 4), as there are at least $\delta_k'$ positions $j\in[2,k)$ for each $h\in(k,n]$, where
\[\delta_k=n-k-2\left\lfloor\frac{\mu-1}{d_P-1}\right\rfloor,\quad\text{and }\quad\delta_k'=k-2-2\left\lfloor\frac{\mu-1}{d_P-1}\right\rfloor.\]
This means the number of choices of $j$, $k$, $h$ such that conditions 1, 2, 3 and 4 hold is at least
\begin{align*}
\sigma&=\sum_{l\in S}\sigma_l=\sum_{l\in S}\max\left\lbrace(l-2)\delta_l,(n-l)\delta_l',0\right\rbrace\\&=\sum_{l\in S}\max\left\lbrace(n-l)(l-2)-2\min\{l-2,n-l\}\left\lfloor\frac{\mu-1}{d_P-1}\right\rfloor,0\right\rbrace.
\end{align*}
We have $\sigma\geq1$ if $|S|\geq1$ and $\max_{l\in S}\left\lbrace\sigma_l\right\rbrace\geq1$. The first condition is satisfied when
\[\mu\leq\left\lfloor\frac{(n-3)(d_P-1)+2d_P-1}{2}\right\rfloor.\]
The second condition is satisfied if $\Delta=|\{l=3,\ldots,n-1|\sigma_l\geq1\}|> n-3-|S|$. By solving for $\sigma_l>0$, we have
\[\Delta=\max\left\lbrace\min\left\lbrace2n-6-4\left\lfloor\frac{\mu-1}{d_P-1}\right\rfloor,n-3\right\rbrace,0\right\rbrace.\]
We can see then this condition is satisfied when
\begin{equation*}\label{eq:mu_bound_4opt_atsp}
\mu\leq\left\lfloor\frac{(n-3)(d_P-1)+d_P+1}{3}\right\rfloor.
\end{equation*}
This proves the first part of the lemma since $d_P\geq2$. Since $\mu\in[2,\lfloor n/3\rfloor]$, we can assume $n\geq6$. Let $S'=\{l\in S|\sigma_l\geq1\}$ and $B=\{l=3,\ldots,n-1|\sigma_l\in(0,\sigma_{(n+2)/2})\}$, we have
\[|B|=\max\left\lbrace2\left(\left\lfloor\frac{n-3}{2}\right\rfloor-2\left\lfloor\frac{\mu-1}{d_P-1}\right\rfloor\right),0\right\rbrace,\text{ and }|S'|-|B|\geq\min\left\lbrace n-1-2\left\lfloor\frac{\mu-1}{d_P-1}\right\rfloor,|S'|\right\rbrace\geq1.\]
We have $|B|>0$ if $d_P\geq\frac{4\mu-3}{n-4}+1\geq2$. Using the fact that $\sigma_l$ is a piece-wise quadratic function, we get
\begin{equation}\label{eq:sigma_4opt_atsp}
\sigma\geq2\sum_{l=1}^{|B|/2}\sigma_{l+2}+4\sum_{l=|B|/2+1}^{\left\lfloor\frac{|S'|+|B|}{4}\right\rfloor}\sigma_{l+2}\geq\begin{cases}
\frac{2n-4}{8}&\text{if }d_P=2\text{ and }\mu=\frac{n}{3}\\
\Omega(n^3)&\text{if }d_P>2\text{ and }\mu=\frac{n}{3}
\end{cases}.
\end{equation}

In each iteration, the Algorithm \ref{alg:ea} selects a directed tour like $I$ with probability at least $d_P/\mu$. There are at least $\sigma$ different 4-opt neighbors on such a tour to produce $P'$. Since there are $\binom{n}{4}$ 4-opt neighbors in total, the probability that the Algorithm \ref{alg:ea} obtains $P'$ from $P$ is at least
\begin{align*}
\frac{d_P}{\mu}\frac{\sigma}{\binom{n}{4}}\geq
\begin{cases}
\frac{12(n-2)}{\mu n(n-1)(n-2)(n-3)}&\text{if }d_P=2\\
\Omega(d_P/\mu n)&\text{if }d_P>2
\end{cases},
\end{align*}
following from \eqref{eq:sigma_4opt_atsp}.
\end{proof}

Compared to Lemma \ref{lemma:ea_step_2opt_tsp} and \ref{lemma:ea_step_4opt_tsp}, the upper bounds of $\mu$ that guarantee lack of local optima are looser in Lemma \ref{lemma:ea_step_3opt_atsp} and \ref{lemma:ea_step_4opt_atsp}, in exchange for lower improvement probability bounds. The former is due to the greater numbers of operation choices: $\mathcal{O}(n^3)$ in 3-opt and $\mathcal{O}(n^4)$ in 4-opt, compared to $\mathcal{O}(n^2)$ in 2-opt and 4-opt exchange. This creates more flexibility, making it more likely to be able to escape local optima. The latter is due to the fact that with $\mu$ being at the upper bound, the numbers of satisfactory operations remain constant (or linear w.r.t. $n$ in case of 4-opt), leading to smaller improvement probabilities as the total numbers of operations increase. We can see that this would lead to a greater expected asymptotic run-time.

\begin{thm}\label{theo:runtime_atsp}
On a $(\mu,\infty)$-instance based on any ATSP instance with $n\geq6$ nodes, and $\mu\geq2$, the Algorithm \ref{alg:ea} obtains a $\mu$-population with maximum diversity within expected time $\mathcal{O}(\mu^2n^4)$ if
\begin{itemize}
\item it uses 3-opt mutation and $\mu\leq\left\lfloor\frac{n+2}{3}\right\rfloor$,
\item it uses 4-opt mutation and $\mu\leq\left\lfloor\frac{n}{3}\right\rfloor$.
\end{itemize}
\end{thm}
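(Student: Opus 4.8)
The plan is to mirror the drift / fitness-level argument used for Theorem \ref{theo:runtime_tsp}, now feeding in the single-step improvement probabilities from Lemma \ref{lemma:ea_step_3opt_atsp} and Lemma \ref{lemma:ea_step_4opt_atsp} in place of the STSP bounds. First I would fix the worst-case trajectory: the algorithm starts from a population of $\mu$ identical tours, so that $d_P=\mu$ and $c_P=n$, and I track the pair $(d_P,c_P)$ as a two-level potential. Since the total directed-edge count is always exactly $\mu n$ and there are $c_P$ edges each carrying count $d_P$, we have $c_P\le\mu n/d_P$ at every stage. As in Theorem \ref{theo:runtime_tsp}, I treat the pessimistic transition structure in which each improvement of the form \eqref{eq:improvement} either lowers $c_P$ by one with $d_P$ fixed, or lowers $d_P$ by one and resets $c_P$ to at most $\mu n/(d_P-1)$. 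Before summing I would invoke Corollary \ref{theo:atsp_max_div} together with the bound $\mu<n-1$ (which holds throughout both $\mu$-ranges) to certify that the optimum is reached exactly when $d_P=1$, so the process terminates once $d_P$ drops from $2$ to $1$ and every sub-optimal population has $d_P\ge2$.

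With this setup, the expected run-time is bounded by a sum over levels $j=2,\dots,\mu$ of (number of improvements needed at level $d_P=j$) times (expected waiting time for one improvement at level $j$). For the 3-opt case I would take the number of improvements at level $j$ to be at most $c_P\le\mu n/j$ and the waiting time to be the reciprocal of the probability in Lemma \ref{lemma:ea_step_3opt_atsp}, giving
\[
\sum_{j=2}^{\mu}\frac{\mu n}{j}\cdot\frac{\mu n(n-1)(n-2)(j-1)}{3[n(j-2)+1][(n+1)(j-2)+2]}.
\]
For the 4-opt case the waiting times come from the case-split in Lemma \ref{lemma:ea_step_4opt_atsp}: $\mathcal{O}(\mu n^3)$ when $d_P=2$ and $\mathcal{O}(\mu n/j)$ when $d_P=j>2$, so the analogous total is $(\mu n/2)\cdot\mathcal{O}(\mu n^3)+\sum_{j=3}^{\mu}(\mu n/j)\cdot\mathcal{O}(\mu n/j)$.

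The heart of the estimate, in both cases, is to isolate the $j=2$ term. At $j=2$ the factors $n(j-2)+1$ and $(n+1)(j-2)+2$ collapse to the constants $1$ and $2$ (and, for 4-opt, the probability degrades to $\Omega(1/\mu n^3)$), so the $j=2$ contribution is $\Theta(\mu^2 n^4)$; this is exactly the ``last stretch'' from $d_P=2$ to $d_P=1$ that dominated in Theorem \ref{theo:runtime_tsp}. For $j\ge3$ the two quadratic-in-$n$ factors in the 3-opt denominator (respectively the $\Omega(j/\mu n)$ probability in the 4-opt case) each supply a full factor of $n$, so each such term is $\mathcal{O}(\mu^2 n^2\cdot (j-1)/(j(j-2)^2))$, and the residual series $\sum_{j\ge3}(j-1)/(j(j-2)^2)$ converges to a constant, leaving an $\mathcal{O}(\mu^2 n^2)$ tail. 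Adding the dominant $j=2$ term to this lower-order tail yields the claimed $\mathcal{O}(\mu^2 n^4)$ for both operators.

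I expect the main obstacle to be the bookkeeping of the floor functions when reciprocating the Lemma \ref{lemma:ea_step_3opt_atsp} and Lemma \ref{lemma:ea_step_4opt_atsp} probabilities, and confirming term by term that the $j\ge3$ contributions are genuinely of lower order than the $j=2$ term; the convergence of the tail series is routine once the per-term $\mathcal{O}(\mu^2 n^2)$ bound is in place, but care is needed so that the constants hidden in the 4-opt $\Omega(\cdot)$ bounds do not interact badly with the summation. A secondary point to verify is that $\mu<n-1$ genuinely holds at the upper ends of both ranges (e.g. $\lfloor(n+2)/3\rfloor<n-1$ for $n\ge6$), since this is what licenses the clean ``$d_P=1$ iff optimal'' characterization underlying the whole summation.
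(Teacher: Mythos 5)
Your proposal is correct and follows essentially the same argument as the paper's proof: the same worst-case potential $(d_P,c_P)$ with $c_P\leq\mu n/d_P$ and pessimistic transition structure, the same characterization that maximum diversity holds iff $d_P=1$ (via the directed extension of the construction, i.e.\ Corollary \ref{theo:atsp_max_div}, valid since $\mu\leq n-1$), and the identical level sums obtained by reciprocating the probability bounds of Lemma \ref{lemma:ea_step_3opt_atsp} and Lemma \ref{lemma:ea_step_4opt_atsp}. The only difference is that you spell out the asymptotic bookkeeping (the dominant $\Theta(\mu^2n^4)$ term at $j=2$ and the convergent $\mathcal{O}(\mu^2n^2)$ tail for $j\geq3$) that the paper leaves implicit.
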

\begin{proof}
In the worst case, the algorithm begins with $d_P=\mu$ and $c_P=n$. At any time, we have $c_P\leq\mu n/d_P$. Moreover, in the worst case, each improvement either reduces $c_P$ by $1$, or reduces $d_P$ by $1$ and sets $c_P$ to its maximum value. With $2\leq\mu\leq n-1$, the maximum diversity is achieved iff $d_P=1$ since the construction in \cite{Do2020} can be extended to directed complete graphs by creating two directed tours out of each undirected tour. According to Lemma \ref{lemma:ea_step_3opt_atsp}, the expected run time Algorithm \ref{alg:ea} requires to reach maximum diversity when using 3-opt mutation is at most
\[\sum_{j=2}^{\mu}\frac{\mu n}{j}\frac{\mu n(n-1)(n-2)(j-1)}{3[n(j-2)+1][(n+1)(j-2)+2]}=\mathcal{O}(\mu^2n^4).\]
Similarly, Lemma \ref{lemma:ea_step_4opt_atsp} implies that when $2\leq\mu\leq\left\lfloor\frac{n}{3}\right\rfloor$, Algorithm \ref{alg:ea} with 4-opt mutation needs at most the following expected run time
\[\frac{\mu^2 n^2(n-1)(n-2)(n-3)}{24(n-2)}+\sum_{j=3}^{\mu}\frac{\mu n}{j}\mathcal{O}(\mu n/j)=\mathcal{O}(\mu^2n^4).\]
\end{proof}

\subsection{QAP}\label{sec:qap}

In QAP, we largely use the same notations as we define in Section \ref{sec:tsp}. Let $d_P=\max_{i,j\in [n]}\{n(i,j,P)\}$ and $c_P=|{i,j\in [n]\mid n(i,j,P)=d_P}|$. For convenience, we use the notation $A(P)=\{(i,j)\mid\exists a\in P,a(i)=j\}$. Let $\phi$ be a shift operation such that for all permutation $a:[n]\to[n]$,
\[b=\phi(a)\implies\forall i\in[n-1],b(i)=a(i+1)\wedge b(n)=a(1).\]
We first show the achievable maximum diversity for any positive $n$ and $\mu$, which will be the foundation for our run-time analysis.

\begin{thm}\label{theo:qap_max_div}
Given $n,\mu\geq1$, there exists a $\mu$-size population $P$ of permutations of $[n]$ such that
\begin{equation}\label{eq:qap_max_div}
\max_{i,j\in [n]}n(i,j,P)-\min_{i,j\in [n]}n(i,j,P)\leq1.
\end{equation}
\end{thm}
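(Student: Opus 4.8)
The plan is to use the shift operator $\phi$ to manufacture, from any single permutation, a block of $n$ permutations whose assignment counts are perfectly uniform, and then to assemble the desired population out of $\lceil\mu/n\rceil$ such blocks. The central observation is that the $\phi$-orbit of any permutation is a \emph{sharply transitive} set of assignments. Concretely, fix a permutation $a$ and consider $O(a)=\{\phi^s(a)\mid 0\le s\le n-1\}$. Writing indices modulo $n$ inside $[n]$, one has $\phi^s(a)(i)=a(i+s)$, so for each fixed position $i$ the collection $\{\phi^s(a)(i)\mid 0\le s\le n-1\}=\{a(1),\dots,a(n)\}=[n]$, each value occurring exactly once. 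Hence every assignment $(i,j)$ is used by exactly one member of $O(a)$, i.e. $n(i,j,O(a))=1$ for all $i,j\in[n]$. The same identity shows the members of $O(a)$ are pairwise distinct: if $\phi^s(a)=\phi^t(a)$ with $s\neq t$ then $a(i+s)=a(i+t)$ for all $i$, contradicting injectivity of $a$; thus $|O(a)|=n$.

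Next I would combine blocks. Write $\mu=qn+r$ with $0\le r\le n-1$. Take $q$ pairwise disjoint orbits $O_1,\dots,O_q$ and include all of them in $P$; since each orbit contributes count $1$ to every $(i,j)$, after this stage $n(i,j,P)=q$ for all $i,j$. If $r>0$, pick one further orbit $O_{q+1}$ disjoint from the previous ones and add any $r$ of its members. Because the members of a single orbit are pairwise assignment-disjoint (their counts over the whole orbit are all $1$), these $r$ permutations raise the count of exactly $rn$ assignments from $q$ to $q+1$ and leave the rest at $q$. In either case every count lies in $\{q,q+1\}$, so $\max_{i,j}n(i,j,P)-\min_{i,j}n(i,j,P)\le1$, which is \eqref{eq:qap_max_div}.

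It remains only to guarantee that enough pairwise disjoint orbits exist so that $P$ has $\mu$ \emph{distinct} members. Since $\phi$ generates a cyclic action of order $n$ on the $n!$ permutations and every orbit has size exactly $n$ (by the distinctness argument above), the permutation space partitions into $(n-1)!$ disjoint orbits. We use $q$ of them when $r=0$ and $q+1$ when $r>0$, so the number of orbits required is at most $\lceil\mu/n\rceil\le(n-1)!$ whenever $\mu\le n!$; and $\mu\le n!$ is in any case forced if the $\mu$ members are to be distinct. Selecting the blocks from distinct orbits then yields $\mu$ distinct permutations satisfying \eqref{eq:qap_max_div}.

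I expect the only genuinely delicate point to be the sharply-transitive claim $n(i,j,O(a))=1$ together with $|O(a)|=n$; once the shift identity $\phi^s(a)(i)=a(i+s)$ is set up with correct modular indexing, both facts follow from bijectivity of $a$, and the remainder is the routine $\mu=qn+r$ accounting above. If one prefers to allow repeated individuals in the population, the disjoint-orbit bookkeeping of the last paragraph can be dropped entirely, since $q$ copies of a single orbit together with $r$ of its members already satisfy \eqref{eq:qap_max_div}.
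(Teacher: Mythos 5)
Your proposal is correct and rests on the same key construction as the paper: the $\phi$-orbit of a permutation covers every assignment exactly once, and the population is assembled by writing $\mu=qn+r$ and balancing the remainder. The difference is purely in the assembly step. The paper's proof uses a single orbit $Q$ and places $k+1$ copies of $r$ of its members and $k$ copies of the rest into $P$, so its population is a multiset containing repeated individuals; your main construction instead draws the $q$ full blocks from pairwise disjoint orbits (using that the $n!$ permutations partition into $(n-1)!$ orbits of size exactly $n$), which yields $\mu$ pairwise \emph{distinct} permutations whenever $\mu\leq n!$. Your closing remark---dropping the disjoint-orbit bookkeeping and taking $q$ copies of one orbit plus $r$ of its members---is precisely the paper's construction, and it is what is needed for arbitrary $\mu\geq1$, since duplicates are unavoidable once $\mu>n!$. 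In that sense your argument is marginally stronger: it shows the balance condition \eqref{eq:qap_max_div} is attainable by a duplicate-free population whenever such a population can exist at all, which is a more meaningful witness of maximum diversity, while the paper's version trades this away for a shorter statement that covers all $\mu$ uniformly.
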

\begin{proof}
We prove by constructing such a $P$. Let $a:[n]\to[n]$ be some arbitrary permutation and $Q=\{\phi^i(a)\mid i\in[n]\}$ where $\phi^i$ is $\phi$ applied $i$ times. Note that $\phi^n(a)=a$. It is the case that no two solutions in $Q$ share assignments, so for all $i,j\in [n]$, we have $n(i,j,Q)=1$, and $A(Q)=[n]\times[n]$. Let $\mu=kn+r$ where $k,r\in\mathbb{N}$ and $r<n$, and $B\subset Q$ where $|B|=r$, we include in $P$ $k+1$ copies of each solution in $B$ and $k$ copies of each solution in $Q\setminus B$. Then $P$ satisfies \eqref{eq:qap_max_div} since
\[\forall(i,j)\in A(B),n(i,j,P)=k+1,\text{ and }\forall(i,j)\in A(Q\setminus B),n(i,j,P)=k.\]
\end{proof}

Here, we give a formal definition of k-opt. We denote the k-opt transformation by $s(S,p,\cdot)$ where $(S,p)\in\{Z\subseteq[n]\mid|Z|=k\}\times\{q:[k]\to[k]\mid\forall i\in[k],q(i)\neq i\}$. The operation is defined on a permutation $a$ as follow
\begin{align}\label{eq:define_kopt_qap}
&a'=s(S,p,a)\implies a'(i)=\begin{cases}
a(i)&\text{if }i\notin S\\
(a\circ r_S^{-1}\circ p\circ r_S)(i)&\text{otherwise}
\end{cases},\\&r_S:S\to[k],\forall i,j\in S,i>j\iff r_S(i)>r_S(j),\nonumber
\end{align}

where $\circ$ denotes a function composition. The operation modifies exactly $k$ positions, $S$, in the permutation, by shuffling elements in those positions according to a derangement $p$. There are $!k=\left\lfloor\frac{k!+1}{e}\right\rfloor$ derangements on $k$ positions \cite{hassani2003derangements}, and $!k\binom{n}{k}$ distinct k-opt neighbors of a permutation on $[n]$. We illustrate this definition with an example
\[S=\{1,3,5\},p=(2,3,1)\implies s(S,p,(\mathcolor{red}{5},4,\mathcolor{red}{3},2,\mathcolor{red}{1}))=(\mathcolor{red}{3},4,\mathcolor{red}{1},2,\mathcolor{red}{5}).\]

Again, we establish a lower bound of the probability of strict progressions toward maximum diversity, in order to obtain a worst-case run-time result. For brevity's sake, we reuse the expression \eqref{eq:improvement} with notations defined in the QAP context. Here, $\mathbbm{1}_F$ denotes a characteristic function assuming value $1$ if the logical expression $F$ holds, and $0$ otherwise.

\begin{lma}\label{lemma:ea_step_kopt_qap}
Given $2\leq k\leq n-1$ and a population of $[n]\to[n]$ permutations $P$ such that $2\leq\mu\leq\left\lfloor\frac{n-k+3+\mathbbm{1}_{k=2}}{2}\right\rfloor$ and $d_P\geq2$, there exists a permutation $a\in P$, a $k$-subset $S$ of $[n]$ and a derangement $p:[k]\to[k]$, such that $P'=(P\setminus\{a\})\cup\{s(S,p,a)\}$ satisfies \eqref{eq:improvement}. Moreover, in each iteration, the Algorithm \ref{alg:ea} with k-opt mutation on a $(\mu,\infty)$-instance makes such an improvement with probability at least
\[\left(\frac{d_P-1.5}{d_P-1}\right)^{k-2}\frac{(n-k+2+\mathbbm{1}_{k=2})(d_P-2)+1}{\mu (n-1)(n-k+1+\mathbbm{1}_{k=2})(!k)/k!}.\]
\end{lma}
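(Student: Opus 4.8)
The plan is to mirror the structure of the proofs of Lemmas~\ref{lemma:ea_step_2opt_tsp}--\ref{lemma:ea_step_4opt_atsp}, generalized to the derangement-based $k$-opt operator. As in those proofs, I would begin by fixing a ``maximally represented'' assignment: since $d_P=\max_{i,j}n(i,j,P)$, there exist $d_P$ permutations $a\in P$ containing some assignment $(i_0,j_0)$ with $n(i_0,j_0,P)=d_P$; pick one such $a$. The strategy is to construct a $k$-opt neighbor $s(S,p,a)$ that (i) moves the critical assignment $(i_0,j_0)$ out, and (ii) introduces only assignments $(i,j)$ whose current count satisfies $n(i,j,P)<d_P-1$, so that after insertion none of the new assignments reaches $d_P$ and the swapped-out critical assignment's count drops. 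Exactly as in \eqref{eq:improvement}, this either reduces $c_P$ (when $d_{P'}=d_P$) or reduces $d_P$, giving the desired improvement. The key counting tool is again the row/column sum identity: for QAP permutations, $\sum_{j}n(i,j,P)=\sum_{i}n(i,j,P)=\mu$ for every fixed index, which plays the role of \eqref{eq:sum_count_atsp} and lets me apply the Pigeonhole Principle to bound how many target positions have small count.

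For the \emph{existence} part, I would let $S=\{i_0,i_1,\dots,i_{k-1}\}$ be the $k$ positions to be deranged, with $i_0$ the row of the critical assignment. A derangement $p$ reassigns the values currently sitting in these positions; the new assignments are $(i,a(p^{-1}\text{-image}))$ type pairs determined by \eqref{eq:define_kopt_qap}. The heart of the argument is to count, via Pigeonhole on the $\mu$-sum identity, how many choices of the remaining indices $i_1,\dots,i_{k-1}$ and of the derangement keep every newly created assignment below count $d_P-1$. Each ``forbidden'' assignment (those with count $\geq d_P-1$) consumes at most $\lfloor(\mu-\text{const})/(d_P-1)\rfloor$ of the available slots, so a Pigeonhole deficit argument yields at least one feasible configuration precisely when $\mu$ sits below the stated bound $\lfloor(n-k+3+\mathbbm{1}_{k=2})/2\rfloor$. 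The $\mathbbm{1}_{k=2}$ correction term is what recovers the tighter $2$-opt behavior: when $k=2$ the derangement is forced (a single transposition), eliminating one degree of freedom but also removing one ``already-used edge'' constraint, so the bound improves by one. I would isolate and verify this $k=2$ special case explicitly to confirm the indicator.

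For the \emph{probability} part, I would combine three factors as in the earlier lemmas: the algorithm picks a critical permutation $a$ with probability at least $d_P/\mu$; among the $!k\binom{n}{k}$ total $k$-opt neighbors, at least some number $\sigma$ are favorable; and dividing gives the per-iteration success probability $\tfrac{d_P}{\mu}\cdot\tfrac{\sigma}{!k\binom{n}{k}}$. The extra factor $\left(\tfrac{d_P-1.5}{d_P-1}\right)^{k-2}$ is the distinctive feature here: it reflects that, beyond fixing the critical position $i_0$, each of the remaining $k-2$ ``free'' positions in $S$ must independently avoid landing on an assignment of count $\geq d_P-1$, and the fraction of safe choices at each stage is roughly $(d_P-1.5)/(d_P-1)$. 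This is precisely where the mutation strength $k$ enters \emph{exponentially}, matching the paper's headline claim that strong mutations degrade the worst-case run-time. The main obstacle will be the combinatorial bookkeeping of counting favorable derangements: unlike the $2$-, $3$-, $4$-opt cases where the new edges are an explicit short list, a general derangement $p$ on $k$ positions produces $k$ new assignments whose feasibility constraints interact, so I expect the hard part to be establishing a clean lower bound on $\sigma$ that factors into the stated product form. I would handle this by sequentially fixing positions and applying Pigeonhole at each step to bound the surviving fraction by $(d_P-1.5)/(d_P-1)$, then multiplying, rather than attempting a single global count; getting the constants and the $\mathbbm{1}_{k=2}$ term to line up across both the existence bound and the probability bound simultaneously is the delicate point.
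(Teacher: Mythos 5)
Your proposal matches the paper's proof in all essential respects: the paper likewise fixes a permutation containing a count-$d_P$ assignment, builds the favorable $k$-opt neighbor sequentially --- choosing one position of $S$ at a time and applying the Pigeonhole Principle with the identity $\sum_{j}n(h,j,P)=\sum_{j}n(j,h,P)=\mu$ at each step so that every newly created assignment has count below $d_P-1$ --- and then multiplies the per-step counts to get $\sigma_k$, divides by the $!k\binom{n}{k}$ total neighbors, and multiplies by the selection probability $d_P/\mu$. The only detail worth noting is that the paper's construction implicitly restricts to derangements that are single $k$-cycles closing back at the critical position, with the final step intersecting a row constraint with a column constraint (this closing step, where both constraints can exploit the count-$d_P$ assignment when $k=2$ but only a count-$1$ assignment otherwise, is where the $\mathbbm{1}_{k=2}$ correction actually originates), which is precisely the ``sequential fixing'' you describe for taming the interacting constraints of general derangements.
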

\begin{proof}
There must be $d_P$ permutations $a$ in $P$ such that $\exists i\in [n],n(i,a(i),P)=d_P$, let $a$ be one such permutation, and $i\in[n]$ such that $n(i,a(i),P)=d_P$. The operation $s(S,p,a)$, for each $j\in S$, removes assignments $j\to a(j)$, and adds $j\to (a\circ r_S^{-1}\circ p\circ r_S)(j)$, with $r_S$ defined in \eqref{eq:define_kopt_qap}. If the counts of all new assignments are less than $d_P-1$, then  $P'=(P\setminus\{a\})\cup\{s(S,p,a)\}$ satisfies \eqref{eq:improvement}. We show that there is always such a pair $(S,p)$ by constructing them step-by-step, starting from $i$ (i.e. $S_0=\{i\}$) and an empty permutation $q$. We will frequently make use of the following equations
\begin{equation}\label{eq:sum_count_qap}
\forall h\in[n],\sum_{j\in[n]}n(h,j,P)=\sum_{j\in[n]}n(j,h,P)=|P|=\mu.
\end{equation}

Let $Z_1=\{x\in[n]\setminus S_0\mid n(i,a(x),P)<d_P-1\}$, for the first step, we can add an element $x'\in Z_1$, $S_{1}=S_0\cup\{x'\}$ and set $q(i)=x'$. Using the Pigeonhole principle with \eqref{eq:sum_count_qap} gives
\[|Z_1|\geq n-1-\left\lfloor\frac{\mu-d_P}{d_P-1}\right\rfloor.\]

At the $l$-th step for all $2\leq l\leq k-2$, let $Z_l=\{x\in[n]\setminus S_{l-1}\mid n(y_{l-1},a(x),P)<d_P-1\}$ where $y_{l-1}$ is the element added in the previous step, we can add an element $y_l\in Z_l$, $S_{l}=S_{l-1}\cup\{y_l\}$, and set $q(y_{l})=y_{l+1}$. Using the Pigeonhole principle with \eqref{eq:sum_count_qap} and the fact $n(y_{l-1},a(y_{l-1}),P)\geq1$ gives
\begin{equation}\label{eq:no_choice_l_qap}
|Z_l|\geq n-l-\left\lfloor\frac{\mu-1}{d_P-1}\right\rfloor.
\end{equation}

At the $k-1$-th step, let $Z_{k-1}$ be defined similarly as $Z_l$, and $B=\{x\in[n]\setminus S_{k-2}\mid n(x,i,P)<d_P-1\}$, we can add an element $y_{k-1}\in Z_{k-1}\cap B$, $S=S_{k-2}\cup\{y_{k-1}\}$, and set $p(y_{k-2})=y_{k-1}$ and $p(y_{k-1})=i$. Since the inequality \eqref{eq:no_choice_l_qap} also holds for $l=k-1$, we have
\[|Z_{k-1}\cap B|\geq|Z_{k-1}|+|B|-n+k-1\geq n-k+1-\left\lfloor\frac{\mu-d_P}{d_P-1}\right\rfloor-\left\lfloor\frac{\mu-1}{d_P-1}\right\rfloor\geq n-k-\left\lfloor2\frac{\mu-d_P}{d_P-1}\right\rfloor.\]

Finally, we define $p=r_S\circ q\circ r_S^{-1}$, thus finish constructing $S$ and $p$. According to each step, for all $j\in S$, $n(j,(a\circ q)(j),P)<d_P-1$. Since $q=r_S^{-1}\circ p\circ r_S$, $s(S,p,a)$ is the desirable operation. As the sequence $(y_j)_{j=1}^{k-1}$ uniquely defines the pair $(S,p)$, we have the minimum number of desirable k-opt neighbors
\begin{equation}\label{eq:min_kopt_qap}
\sigma_k=|Z_{k-1}\cap B|\prod_{l=1}^{k-2}|Z_l|\geq\left(n-k-\left\lfloor2\frac{\mu-d_P}{d_P-1}\right\rfloor\right)\left(n-1-\left\lfloor\frac{\mu-d_P}{d_P-1}\right\rfloor\right)\prod_{l=2}^{k-2}\left(n-l-\left\lfloor\frac{\mu-1}{d_P-1}\right\rfloor\right).
\end{equation}
Note that \eqref{eq:min_kopt_qap} only applies to $k>2$. For $k=2$, there is only one step, so we instead have
\[\sigma_2=|Z_1\cap B|\geq n-1-2\left\lfloor\frac{\mu-d_P}{d_P-1}\right\rfloor.\]
In any case, $\sigma_k>0$ if $|Z_{k-1}\cap B|\geq1$, which is satisfied when
\[\mu\leq\left\lfloor\frac{(n-k+\mathbbm{1}_{k=2})(d_P-1)+2d_P-1}{2}\right\rfloor.\]

This proves the first part of the lemma since $d_P\geq2$. In each iteration, the Algorithm \ref{alg:ea} selects a mapping like $a$ with probability at least $d_P/\mu$. There are at least $\sigma_k$ different k-opt neighbors on such a mapping to produce $P'$. Since there are $!k\binom{n}{k}$ k-opt neighbors in total, for $k>2$, the probability that the Algorithm \ref{alg:ea} obtains $P'$ from $P$ is at least
\begin{align*}
\frac{\sigma_k}{\binom{n}{k}!k}\frac{d_P}{\mu}&\geq\frac{d_P}{\mu}\frac{\left(n-k-\left\lfloor2\frac{\mu-d_P}{d_P-1}\right\rfloor\right)\left(n-1-\left\lfloor\frac{\mu-d_P}{d_P-1}\right\rfloor\right)\prod_{l=2}^{k-2}\left(n-l-\left\lfloor\frac{\mu-1}{d_P-1}\right\rfloor\right)}{!k\prod_{j=0}^{k-1}(n-j)/k!}\\
&
\geq\left(\frac{d_P-1.5}{d_P-1}\right)^{k-2}\frac{(n-k+2)(d_P-2)+1}{\mu (n-1)(n-k+1)(!k)/k!}.
\end{align*}
For $k=2$, this lower bound is
\[\frac{\sigma_2}{\binom{n}{2}!2}\frac{d_P}{\mu}\geq\frac{(n+1)(d_P-2)+1}{d_P-1}\frac{2d_P}{\mu n(n-1)}\geq\frac{2[(n+1)(d_P-2)+1]}{\mu n(n-1)}.\]
\end{proof}

Regarding the upper bound of $\mu$, we can see that the presence of $k$, that is the number of elements affected by mutation, in the numerator exhibits similar pattern as in the bounds of $\mu$ in Lemma \ref{lemma:ea_step_3opt_atsp} and \ref{lemma:ea_step_4opt_atsp}. The difference is the denominator, which is $3$ instead $2$ as in Lemma \ref{lemma:ea_step_kopt_qap}. This might be explained by the observation that the minimum edge distance between two directed tours is $3$ and not $2$ as between permutations.

It is important to note that in Lemma \ref{lemma:ea_step_kopt_qap}, we only consider one scenario where the improvement can be made, from which the upper bound of $\mu$ is derived. One would then assume that other scenarios would make strict improvements possible at larger $\mu$, even at near-optimal diversity (e.g. $d_P=2$). It turns out that this is not the case with 2-opt, meaning if $\mu$ exceeds this bound, then no improvement scenario is guaranteed. We demonstrate the tightness of this bound with the following constructive proof.

\begin{pro}\label{lemma:ea_step_2opt_qap_bound}
Given $n\geq5$, there exists a population $P$ of $\left\lfloor\frac{n+2}{2}\right\rfloor+1$ permutations on $[n]$ such that $d_P=2$ and for all 2-opt $s(S,p,\cdot)$ and $a,b\in P$, the new population $P'=(P\setminus\{a\})\cup\{s(S,p,b)\}$ is such that $\mathcal{D}(P')\geq\mathcal{D}(P)$.
\end{pro}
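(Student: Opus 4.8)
The plan is to produce an explicit population $P$ and verify directly that no $2$-opt move decreases $\mathcal{D}$. The first step is to prune the moves that must be checked. Since a $2$-opt $s(S,p,\cdot)$ changes a permutation in exactly two coordinates, $s(S,p,b)$ and $b$ share at least $n-2$ assignments. Hence whenever $a\neq b$ and $s(S,p,b)\neq a$, the population $P'=(P\setminus\{a\})\cup\{s(S,p,b)\}$ contains the pair $\{b,\,s(S,p,b)\}$, so the largest entry of $\mathcal{D}(P')$ is at least $n-2$. If the construction keeps every pairwise overlap at most $1$, then for $n\geq5$ this entry dominates the largest entry of $\mathcal{D}(P)$ and forces $\mathcal{D}(P')>\mathcal{D}(P)$; and if $s(S,p,b)=a$ then $P'=P$. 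Thus only the in-place moves $a=b$, which replace a single permutation $X$ by a $2$-opt neighbour $X'=s(\{i,x\},p,X)$, can possibly improve $\mathcal{D}$, and these are all that remain to be blocked.

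Next I would quantify the effect of an in-place move. Replacing $X$ by $X'$ leaves every overlap not involving $X$ unchanged and alters the overlaps of $X$ only through coordinates $i$ and $x$. Counting with \eqref{eq:sum_count_qap}, the change in the number of unit overlaps equals $2-\Delta$, where $\Delta=n(i,X(i),P)+n(x,X(x),P)-n(i,X(x),P)-n(x,X(i),P)$. Because all counts are at most $d_P=2$, such a move can lower $\mathcal{D}$ only if $\Delta\geq3$ while creating no overlap of size $2$; this in turn forces one of the removed assignments to have count $2$ and both new assignments $(i,X(x))$ and $(x,X(i))$ to be unused. Blocking every improvement is therefore equivalent to a protection property: for each permutation carrying the repeated assignment, every swap that removes it must re-introduce an already-used assignment, so that $\Delta\leq2$.

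With this criterion I would build $P$ around a single count-$2$ cell. Fix a coordinate and a value, say position $1$ and value $v$. Let $X$ and the remaining $\mu-2=\lfloor n/2\rfloor$ permutations be mutually assignment-disjoint members of the shift family $\{\phi^{\,i}(a)\}$ from the proof of Theorem~\ref{theo:qap_max_div}, and let $Y$ be a dedicated permutation sharing with $X$ exactly the assignment $(1,v)$ and disjoint from every shift. Then $P$ has $d_P=2$ with a unique repeated assignment and all pairwise overlaps at most $1$, so the reduction of the first step applies. The protection property for $X$, and symmetrically for $Y$, now reads: for every column $x$ in which the value $v$ is absent, both $X(x)$ and $Y(x)$ must appear in column $1$. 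A pigeonhole count shows the number of such columns is $\lceil n/2\rceil-1$ while column $1$ supplies $\lfloor n/2\rfloor$ usable values, so the covering is feasible exactly at $\mu=\lfloor n/2\rfloor+2=\lfloor\frac{n+2}{2}\rfloor+1$ --- one beyond the bound of Lemma~\ref{lemma:ea_step_kopt_qap}, which is precisely what makes that bound tight.

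I expect the main obstacle to be the joint design of $X$, $Y$ and the shift set so that the protection property holds simultaneously for $X$ and for $Y$: the covering condition pins down the values $X(x)$ and $Y(x)$ on all columns missing $v$, yet $X$ and $Y$ must still be disjoint off $(1,v)$ and no shift may collide with them there. A naive choice (for instance $X$ the identity, $Y$ a rotation of the non-fixed coordinates, and the shifts covering one representative of each pair $\{p,-p\}$) satisfies the condition for $X$ but typically fails for $Y$ at a single boundary column, so the two special permutations and the shift set must be tuned together, with a separate parity analysis for even and odd $n$. Once such a family is fixed, checking $d_P=2$, the overlap bound, and the inequality $\Delta\leq2$ for every remaining in-place move is routine.
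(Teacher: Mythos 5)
Your reduction is sound and matches the paper's own first step: since $|A(b)\cap A(s(S,p,b))|=n-2$, any move with $a\neq b$ leaves a pair of overlap $n-2$ in $P'$, which for $n\geq5$ dominates a population whose pairwise overlaps are all at most $1$ (and $s(S,p,b)=a$ gives $P'=P$). Your characterization of improving in-place moves — one removed assignment of count $2$, both introduced assignments of count $0$ — is likewise exactly the paper's criterion, and the architecture you propose (a single count-$2$ assignment $(1,v)$ shared by two special permutations $X,Y$, padded with $\lfloor n/2\rfloor$ mutually disjoint shift-type permutations) is the architecture of the paper's construction.

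However, the proof stops at precisely the point where the proposition lives: it is an existence statement, and you never exhibit the population. Your pigeonhole count ($\lceil n/2\rceil-1$ columns missing $v$ versus $\lfloor n/2\rfloor$ usable column-$1$ values) only shows that the covering condition is not ruled out by cardinality; it does not show that the values $X(x)$ and $Y(x)$ on those columns can simultaneously be drawn from the shifts' column-$1$ values while keeping $X$ and $Y$ disjoint away from $(1,v)$ and disjoint from every shift. You flag this yourself (``a naive choice \dots typically fails for $Y$ at a single boundary column''), and this joint design is exactly what the paper's proof consists of: for even $n>6$ pure shifts do \emph{not} suffice and the intermediate permutations must themselves be perturbed by transpositions ($a_j=s_{n-j+1}\circ\phi^{j}(a_1)$), with two tailored permutations $a_l,a_{l+1}$ closing the family; odd $n\geq5$ admits pure shifts but different tailored $a_l,a_{l+1}$; and $n=6$ needs a separate ad hoc population. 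Each case is then verified by checking that every 2-opt touching $(1,v)$ reintroduces an assignment of count $1$. Without producing such a family — or replacing it by a genuine non-constructive existence argument, which your counting sketch is not — the proposition remains unproved.
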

\begin{proof}
We prove by construction. For convenience, let $l=\left\lfloor\frac{n+2}{2}\right\rfloor$. Firstly, we see that if $a\neq b$, then both $b$ and $s(S,p,b)$ are in $P'$. Since $|A(b)\cap A(s(S,p,b))|=n-2$ for any 2-opt $s(S,p,\cdot)$, we know that regardless of $P$, $d_{P'}\geq2$ and if $d_{P'}=2$ then $c_{P'}\geq n-2$. We can construct $l+1$ permutations for $P$ by adding an arbitrary starting permutation, then sequentially applying the shift operation $\phi$ to generate $l$ more, and finally applying 2-opt on the last permutation at any 2 cyclically consecutive positions. This gives us $P$ such that $d_P=2$ and $c_P=1$, meaning $\mathcal{D}(P')\geq\mathcal{D}(P)$. Therefore, we assume $a=b$. Secondly, we observe that $\mathcal{D}(P')<\mathcal{D}(P)$ only holds if $b$ (the permutation undergoing the 2-opt) is such that there is a position $i\in[n]$ where $n(i,b(i),P)=d_P$; the 2-opt must also change said position.

For $n=6$, we have $a_1=(i_1,\ldots,i_6)$, $a_2=\phi^{2}(a)$, $a_3=(i_4,i_5,i_6,i_2,i_3,i_1)$, $a_4=(i_2,i_1,i_4,i_5,i_6,i_3)$, and $a_5=(i_1,i_6,i_2,i_3,i_4,i_5)$. For any even $n>6$, let $a_1=(i_1,\ldots,i_n)$ w.l.o.g and for $j=2,\ldots,l-1$ let $a_j=s_{n-j+1}\circ\phi^{j}(a_1)$ where $s_j$ is a 2-opt in positions $j$ and $j+1$. Finally, let
\[a_l=(i_2,i_1,i_4,\ldots,i_{n/2+1},i_3,i_{n/2+3},\ldots,i_{n},i_{n/2+2}),\quad a_{l+1}=(i_1,i_n,i_2,\ldots,i_{n-1}),\]
and $P=\{a_1,\ldots,a_{l+1}\}$. We have $|A(a_1)\cap A(a_{l+1})|=1$ and $A(a_1)\cap A(a_j)=A(a_{l+1})\cap A(a_j)=A(a_h)\cap A(a_j)=\emptyset$ for any $j,h=2,\ldots,l$ and $j\neq h$. For any $j=2,\ldots,n/2+1$, $n(1,i_j,P)=1$, and for any $j=n/2+1,\ldots,n$, $n(j,i_1,P)=1$. This means \eqref{eq:improvement} cannot be satisfied from any 2-opt on $a_1$, nor can it be satisfied from a 2-opt on $a_{l+1}$ at positions $1$ and $j$ for any $j=3,\ldots,n$. Given that $n(2,i_1,P)=1$ since $(2,i_1)\in A(a_l)$, we have that $\mathcal{D}(P')\geq\mathcal{D}(P)$ by any 2-opt on any permutation in $P$.

For any odd $n\geq5$, again let $a_1=(i_1,\ldots,i_n)$, and for all $j=2,\ldots,l-1$, $a_j=\phi^j(a_1)$. Finally,
\begin{align*}
&a_l=(i_2,i_{(n+5)/2},\ldots,i_n,i_{(n+3)/2},i_3,\ldots,i_{(n+1)/2},i_1),\\\text{and}\quad&a_{l+1}=(i_1,i_3,\ldots,i_{(n+1)/2},i_2,i_{(n+5)/2},\ldots,i_n,i_{(n+3)/2}).
\end{align*}
Similarly, $\mathcal{D}(P')<\mathcal{D}(P)$ only if the 2-opt is performed on $a_1$ or $a_{l+1}$, and changes position $1$ of either. However, such an operation must introduce an assignment in $\{(1,i_j),(j+(n-1)/2,i_1)\mid j=2,\ldots,(n+1)/2\}$, all of which have count $1$ in $P$. Since $d_P=2$ and $c_P=1$, this means $\mathcal{D}(P')\geq\mathcal{D}(P)$.
\end{proof}

Naturally, Lemma \ref{lemma:ea_step_kopt_qap} allows us to derive the following run-time bound for Algorithm \ref{alg:ea}, similar to Theorem \ref{theo:runtime_tsp} and \ref{theo:runtime_atsp}.

\begin{thm}\label{theo:runtime_qap}
On a $(\mu,\infty)$-instance based on any QAP instance with $n\geq1$, and $2\leq\mu\leq\left\lfloor\frac{n-k+3+\mathbbm{1}_{k=2}}{2}\right\rfloor$, the Algorithm \ref{alg:ea} with k-opt mutation obtains a $\mu$-population with maximum diversity within expected time $\mathcal{O}(2^{k-2}\mu^2n^2(n-k+1))$.
\end{thm}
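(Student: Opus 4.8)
The plan is to follow the same scheme as in the proofs of Theorem~\ref{theo:runtime_tsp} and Theorem~\ref{theo:runtime_atsp}: bound the expected run-time by summing, over each value of $d_P$, the maximal number of single-step improvements needed times the expected waiting time per improvement supplied by Lemma~\ref{lemma:ea_step_kopt_qap}. First I would fix the worst-case convergence path. The algorithm starts with $d_P=\mu$ and $c_P=n$, and since every permutation uses exactly $n$ assignments we have the invariant $c_P d_P\leq\mu n$, hence $c_P\leq\mu n/d_P$ at all times. In the worst case each accepted improvement either decreases $c_P$ by one while keeping $d_P$ fixed, or decreases $d_P$ by one and resets $c_P$ to its maximal value $\mu n/(d_P-1)$. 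Because $\mu\leq\lfloor(n-k+3+\mathbbm{1}_{k=2})/2\rfloor\leq n$, Theorem~\ref{theo:qap_max_div} guarantees that maximum diversity is attained exactly when $d_P=1$ (all counts then lie in $\{0,1\}$), so the process terminates once $d_P$ has been driven from $\mu$ down to $1$.

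Next I would charge the cost. At each fixed level $d_P=j$ there are at most $\mu n/j$ count-reducing steps before $d_P$ drops, and by Lemma~\ref{lemma:ea_step_kopt_qap} each such step succeeds with probability at least the displayed bound evaluated at $d_P=j$. Taking reciprocals for the geometric waiting times and summing over $j$ gives the expected run-time bound
\[\sum_{j=2}^{\mu}\frac{\mu n}{j}\left(\frac{j-1}{j-1.5}\right)^{k-2}\frac{\mu (n-1)(n-k+1+\mathbbm{1}_{k=2})(!k)/k!}{(n-k+2+\mathbbm{1}_{k=2})(j-2)+1}.\]
I would then pull out the instance-independent constant $(!k)/k!=\lfloor(k!+1)/e\rfloor/k!\leq 1$ and use the uniform bound $\left(\tfrac{j-1}{j-1.5}\right)^{k-2}\leq 2^{k-2}$, valid for all $j\geq2$ and attained at $j=2$. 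This single factor is the source of the exponential $2^{k-2}$ in the claim and reflects the cost of strong mutation near optimal diversity.

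The remaining work is to show the sum is dominated by its first term. The $j=2$ summand equals $\tfrac{\mu n}{2}\cdot 2^{k-2}\cdot\mu(n-1)(n-k+1+\mathbbm{1}_{k=2})(!k)/k!=\mathcal{O}(2^{k-2}\mu^2n^2(n-k+1))$, matching the target. For $j\geq3$ I would bound the denominator below by $(n-k+2)(j-2)$ and again use $\left(\tfrac{j-1}{j-1.5}\right)^{k-2}\leq 2^{k-2}$, which turns the tail into
\[\mathcal{O}\!\left(2^{k-2}\mu^2 n^2(n-k+1)\right)\frac{1}{n-k+2}\sum_{j=3}^{\mu}\frac{1}{j(j-2)}.\]
Since $\sum_{j\geq3}\frac{1}{j(j-2)}$ telescopes to a constant and $(n-k+1)/(n-k+2)\leq1$, the tail is only $\mathcal{O}(2^{k-2}\mu^2n^2)$, a factor of $n$ below the leading term. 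Adding the two contributions yields the stated $\mathcal{O}(2^{k-2}\mu^2n^2(n-k+1))$.

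I expect the main obstacle to be the clean handling of the factor $\left(\tfrac{j-1}{j-1.5}\right)^{k-2}$: one must argue that it is decreasing in $j$, so that its maximum $2^{k-2}$ occurs precisely at the ``last stretch'' $j=2$ (the same phenomenon noted for STSP), while simultaneously confirming it does not spoil convergence of the $j\geq3$ tail. The bookkeeping around the indicator $\mathbbm{1}_{k=2}$ and the derangement ratio $(!k)/k!$ is routine by comparison, since both contribute only constant (indeed $\leq1$) factors to the asymptotics.
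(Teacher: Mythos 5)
Your proposal is correct and follows essentially the same route as the paper's proof: the identical worst-case path ($d_P$ from $\mu$ down to $1$, with $c_P\leq\mu n/d_P$ at each level), the same appeal to Theorem~\ref{theo:qap_max_div} for the optimality criterion $d_P=1$, and the exact same sum over $j$ built from Lemma~\ref{lemma:ea_step_kopt_qap}. The only difference is that you spell out the asymptotic evaluation of that sum (isolating the $j=2$ term, bounding $\left(\tfrac{j-1}{j-1.5}\right)^{k-2}\leq 2^{k-2}$, and telescoping the tail), which the paper simply asserts.
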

\begin{proof}
In the worst case, the algorithm begins with $d_P=\mu$ and $c_P=n$. At any time, we have $c_P\leq\mu n/d_P$. Moreover, in the worst case, each improvement either reduces $c_P$ by $1$, or reduces $d_P$ by $1$ and sets $c_P$ to its maximum value. With $2\leq\mu\leq\left\lfloor\frac{n-k+3+\mathbbm{1}_{k=2}}{2}\right\rfloor$, the maximum diversity is achieved iff $d_P=1$ according to Theorem \ref{theo:qap_max_div}. According to Lemma \ref{lemma:ea_step_kopt_qap}, the expected run time Algorithm \ref{alg:ea} requires to reach maximum diversity is at most
\[\sum_{j=2}^{\mu}\frac{\mu n}{j}\left(\frac{j-1}{j-1.5}\right)^{k-2}\frac{\mu (n-1)(n-k+1+\mathbbm{1}_{k=2})(!k)/k!}{(n-k+2+\mathbbm{1}_{k=2})(j-2)+1}=\mathcal{O}(2^{k-2}\mu^2n^2(n-k+1)).\]
\end{proof}

The results in Theorem \ref{theo:runtime_tsp} and \ref{theo:runtime_qap} (for $k=2$) are identical due to similarities between structures of STSP tours and QAP mappings, and the same intuition applies. Of note is that according to the proofs, the probability of making improvements drops as the population is closer to maximum diversity. This is a common phenomenon for randomized heuristics in general, which we expect to see replicated in experimentation. Our result also shows that such reduction is more severe at stronger mutation strengths, which seems to be the direct consequence of greater flexibility that comes with making larger changes. Lemma \ref{lemma:ea_step_kopt_qap} implies that this also holds in scenarios where $\alpha$ is non-trivial in a sense that it changes how diverse the population can be; the algorithm tends to have a harder time converging with stronger mutations, when the population is close to maximally achievable diversity.

\section{Experimental investigations}\label{sec:experiment}

We perform two sets of experiments to establish baseline results for evolving diverse QAP mappings. These involve running Algorithm \ref{alg:ea} separately using two described measures: $\mathcal{N}$ \eqref{eq:N_div_TSP} and $\mathcal{D}$ \eqref{eq:D_div}. We denote these two variants by $D_1$ and $D_2$ as they are correspondingly correlated measures. The mutation operator used is 2-opt. Firstly, we consider the unconstrained case where no quality constraint is applied. Then, we impose constraints with varying quality thresholds $\alpha$ on the solutions. For similar experiments on TSP, we refer to \cite{Do2020}.

For our experiments, we use three QAPLIB instances: Nug30 \cite{Nugent1968}, Lipa90b \cite{Li1992}, Esc128 \cite{Eschermann}. The optimal solutions for these instances are known\footnote{The QAPLIB instances are publicly available at \url{https://coral.ise.lehigh.edu/data-sets/qaplib/}}. We vary the population size among $3$, $10$, $20$, $50$. We run each variant of the algorithm 30 times on each instance, and each run is allotted $\mu n^2$ maximum iterations. It is important to note that any reported diversity score is normalized with the upper bound appropriate to the instance. For $D_1$, the bound is derived from Theorem \ref{theo:qap_max_div}, while it is $\mu n$ for $D_2$ as mentioned. We specify the differences in settings between unconstrained case and constrained case in the following sections.

\subsection{Unconstrained diversity optimization}\label{sec:unconstrained}

In the unconstrained case, we are interested in how optimizing for one measure affect the other, and how many iterations are needed to reach maximum diversity from zero diversity. To this end, we set the initial population to contain only duplicates of some random tour. Furthermore, we apply a stopping criterion that holds when the measure being optimized for reaches its upper bound. However, for $n>\mu$, the bound is unreachable, so we expect that the algorithm does not terminate prematurely while minimizing $\mathcal{D}$.

\begin{figure}[t]
\centering
\includegraphics[width=1\linewidth]{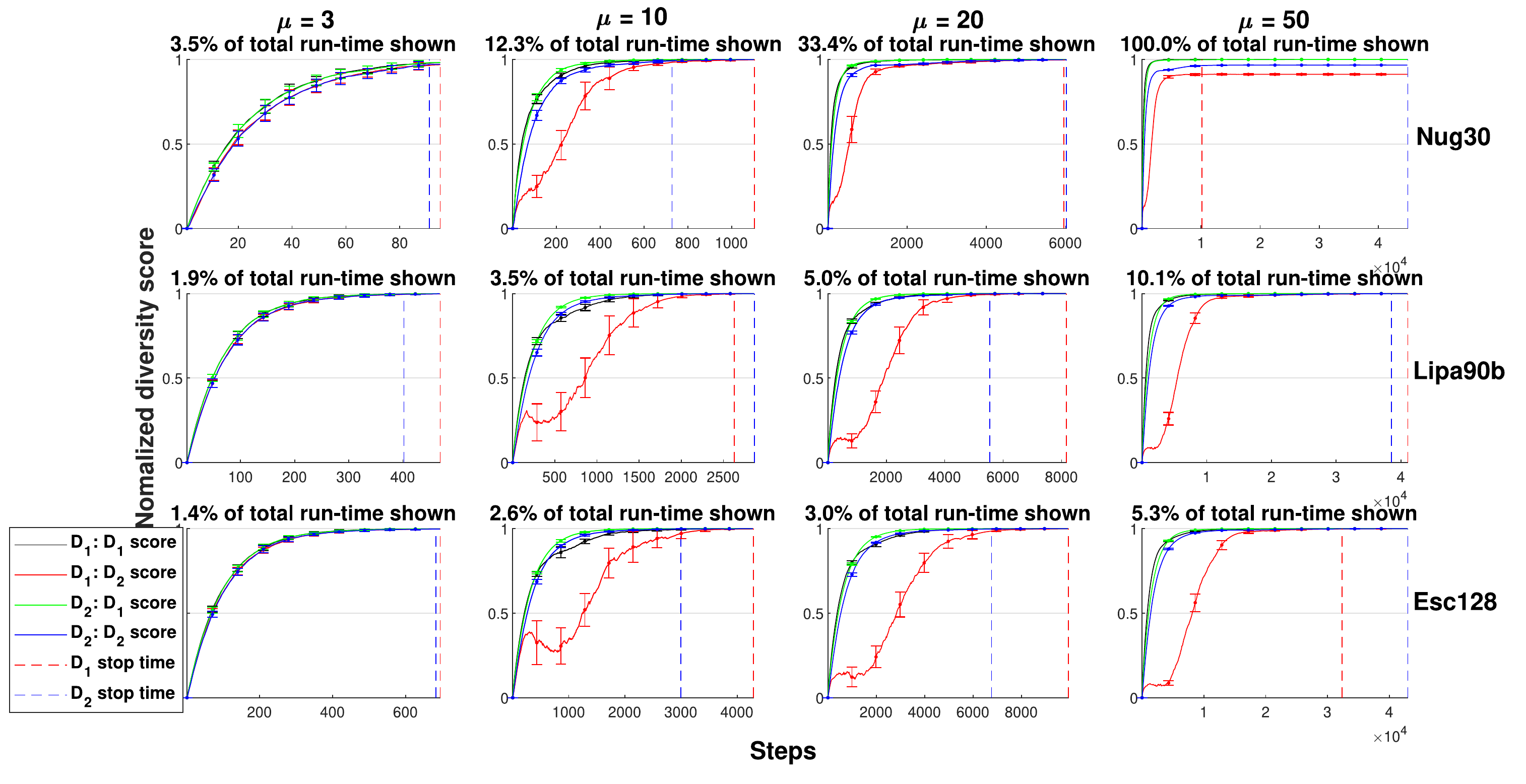}
\caption{Means and standard deviations of normalized $D_1$ and $D_2$ scores from both approaches over time. The total run-time is $\mu n^2$. The dashed lines denote the average numbers of steps till termination.
}
\label{fig:qap_unconstrained}
\end{figure}

Figure \ref{fig:qap_unconstrained} shows the mean diversity scores and their standard deviations throughout the runs, and the average numbers of iterations till termination. Each column corresponds to a $\mu$ value, and each row a QAPLIB instance. For visibility, in each case, the X-axis range is scaled to the maximum number of steps till termination from all runs, and missing data points are extrapolated from the final scores. Since the red curves exhibit extremely noisy behaviors, smoothing is applied to expose the high-level trends, by taking an average in each of 500 equal-length intervals along the time dimension. This changes the appearance of other curves very minimally. Overall, when $\mu\leq n$, Algorithm \ref{alg:ea} maximizes both $D_1$ and $D_2$ well within the run time limit. The ratios between needed run-times and corresponding total run-times seem to correlate with the ratio $\mu/n$. Additionally, the algorithm seems to require similar run-time to optimize for both measures, as no consistent differences are visible.

The figure also shows a notable difference in the evolutionary trajectories resulted from using $\mathcal{N}$ and $\mathcal{D}$ for survival selection. When $\mathcal{D}$ is used, Algorithm \ref{alg:ea} improves $D_1$ about as efficiently as when $\mathcal{N}$ is used. On the other hand, when $\mathcal{N}$ is used, it increases $D_2$ poorly during the early stages in many cases, and in some cases even noticeably decreases it in short periods. Furthermore, in many cases, $D_2$ only starts to increase quickly when $D_1$ reaches a certain threshold. That said, this particular difference is not observable for $\mu=3$. Nevertheless, it indicates that even in easy cases ($\mu\leq n$), highly even distributions of assignments in the population are unlikely to prevent clustering. In fact, judging by the noisy behaviors in the red curves, the degree of clustering seems almost uncorrelated to $D_1$. In contrast, separating each solution from the rest of the population tends to improve overall diversity effectively.

To further investigate the impact of $n$ and $\mu$ on the run-time of Algorithm \ref{alg:ea}, we carry out another experiment on synthetic instances with exhaustive combinations of $(n,\mu)$ values. More precisely, $n$ is assigned values from $20$ to $120$ with step $5$, and $\mu$ from $5$ to $120$ with step $5$. For each value pair of $(n,\mu)$, we run 30 times Algorithm \ref{alg:ea} using $\mathcal{N}$, with $\mu n^2$ maximum iterations, and record the number of step it takes to reach maximum diversity in each run. If it fails, the maximum iteration is recorded. Additionally, we experiment with different mutation strengths, namely 2-opt, 3-opt, 4-opt, and $\lceil n/5\rceil$-opt.

\begin{figure}[ht]
\centering
\begin{subfigure}[b]{0.49\textwidth}
\centering
\includegraphics[width=1\linewidth]{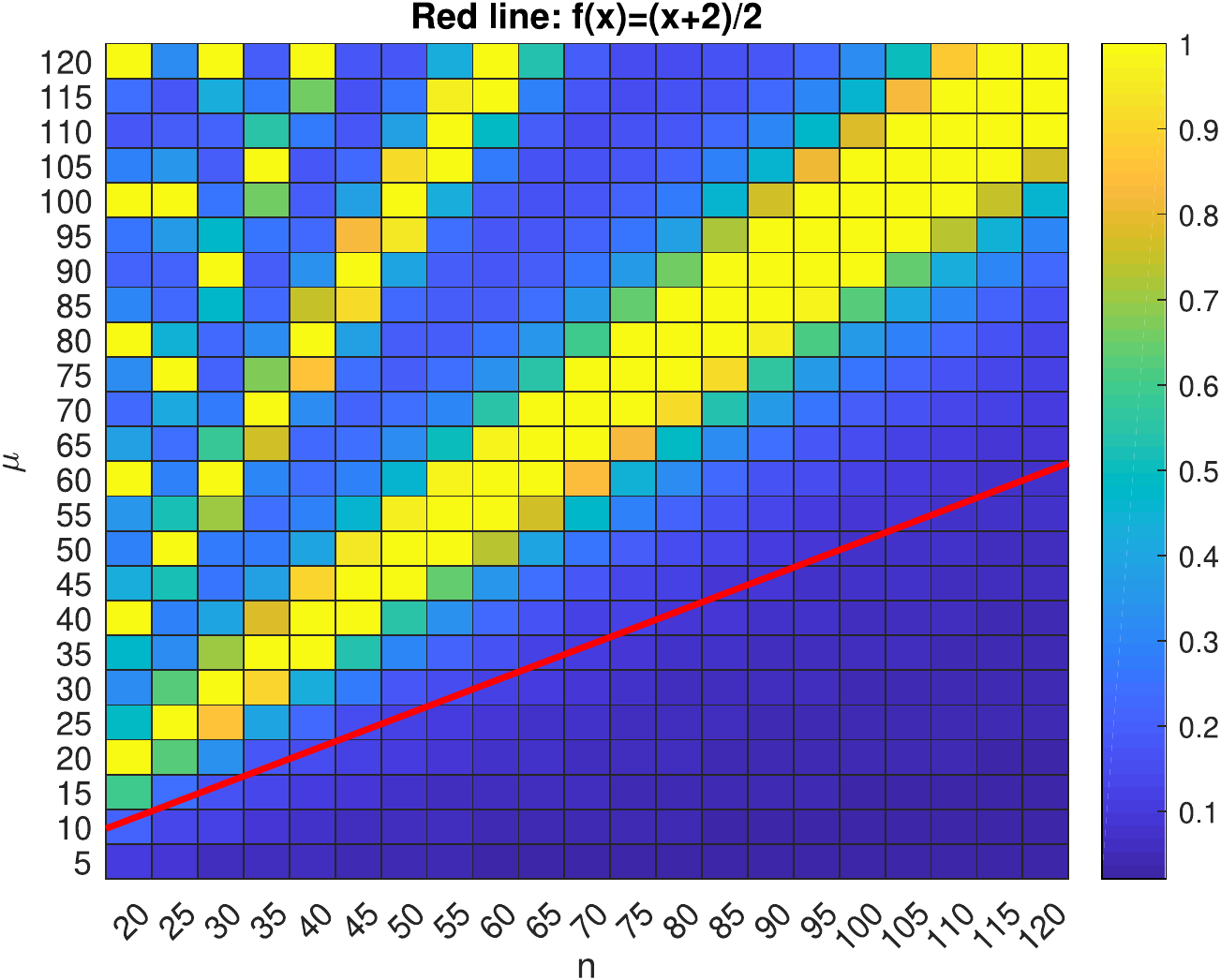}
\caption{2-opt}\label{fig:qap_unconstrained21}
\end{subfigure}
\begin{subfigure}[b]{0.49\textwidth}
\centering
\includegraphics[width=1\linewidth]{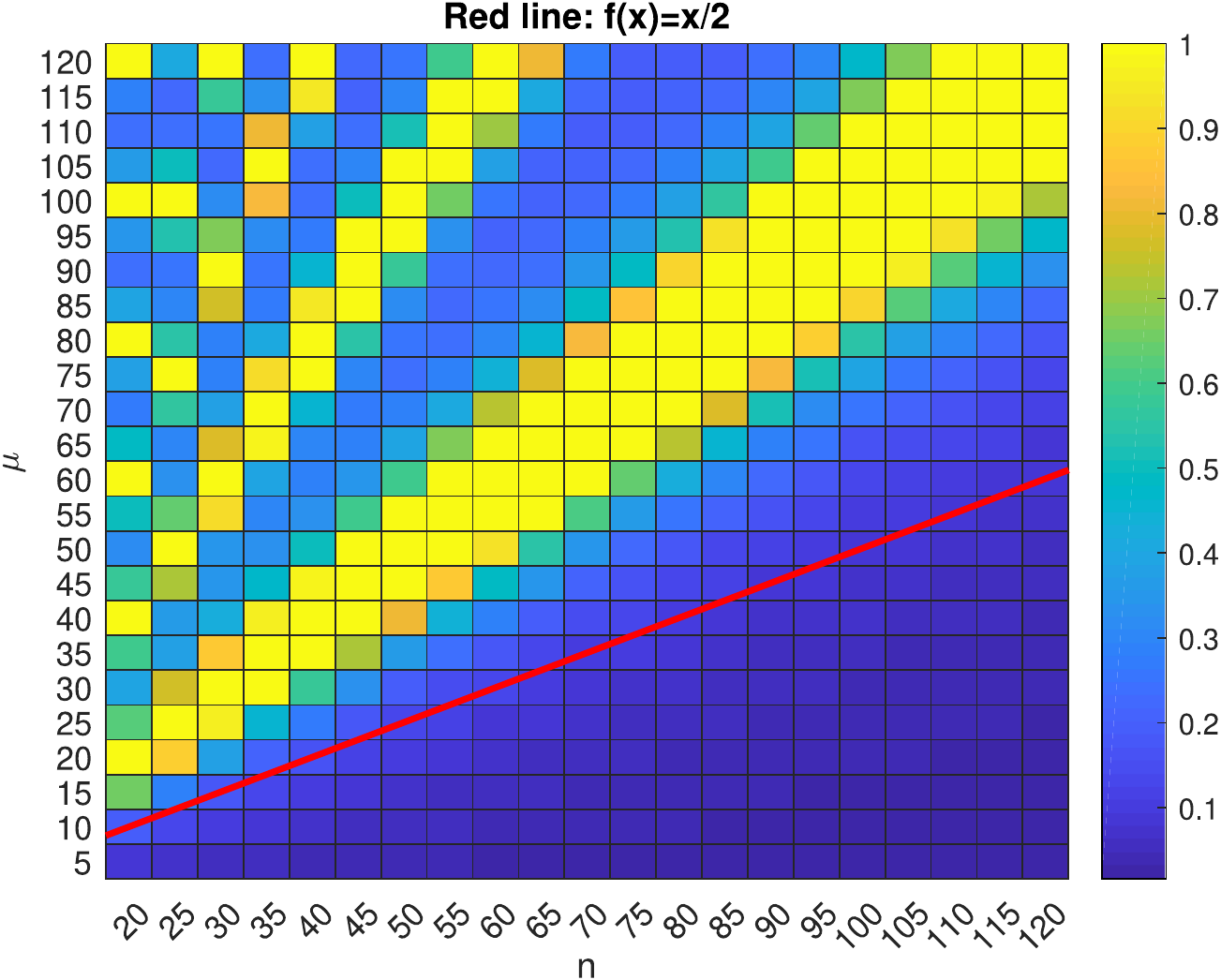}
\caption{3-opt}\label{fig:qap_unconstrained22}
\end{subfigure}
\begin{subfigure}[b]{0.49\textwidth}
\centering
\includegraphics[width=1\linewidth]{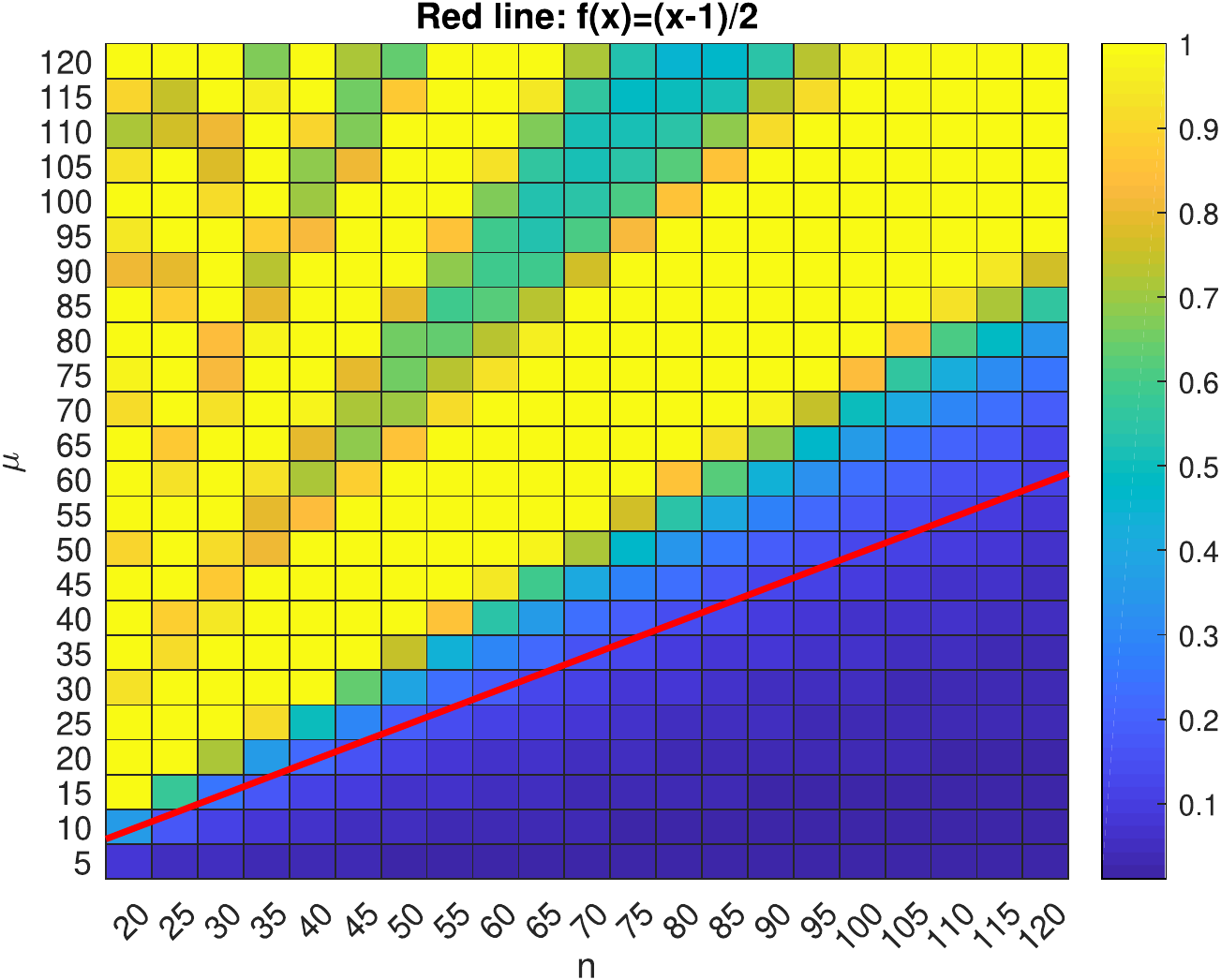}
\caption{4-opt}\label{fig:qap_unconstrained23}
\end{subfigure}
\begin{subfigure}[b]{0.49\textwidth}
\centering
\includegraphics[width=1\linewidth]{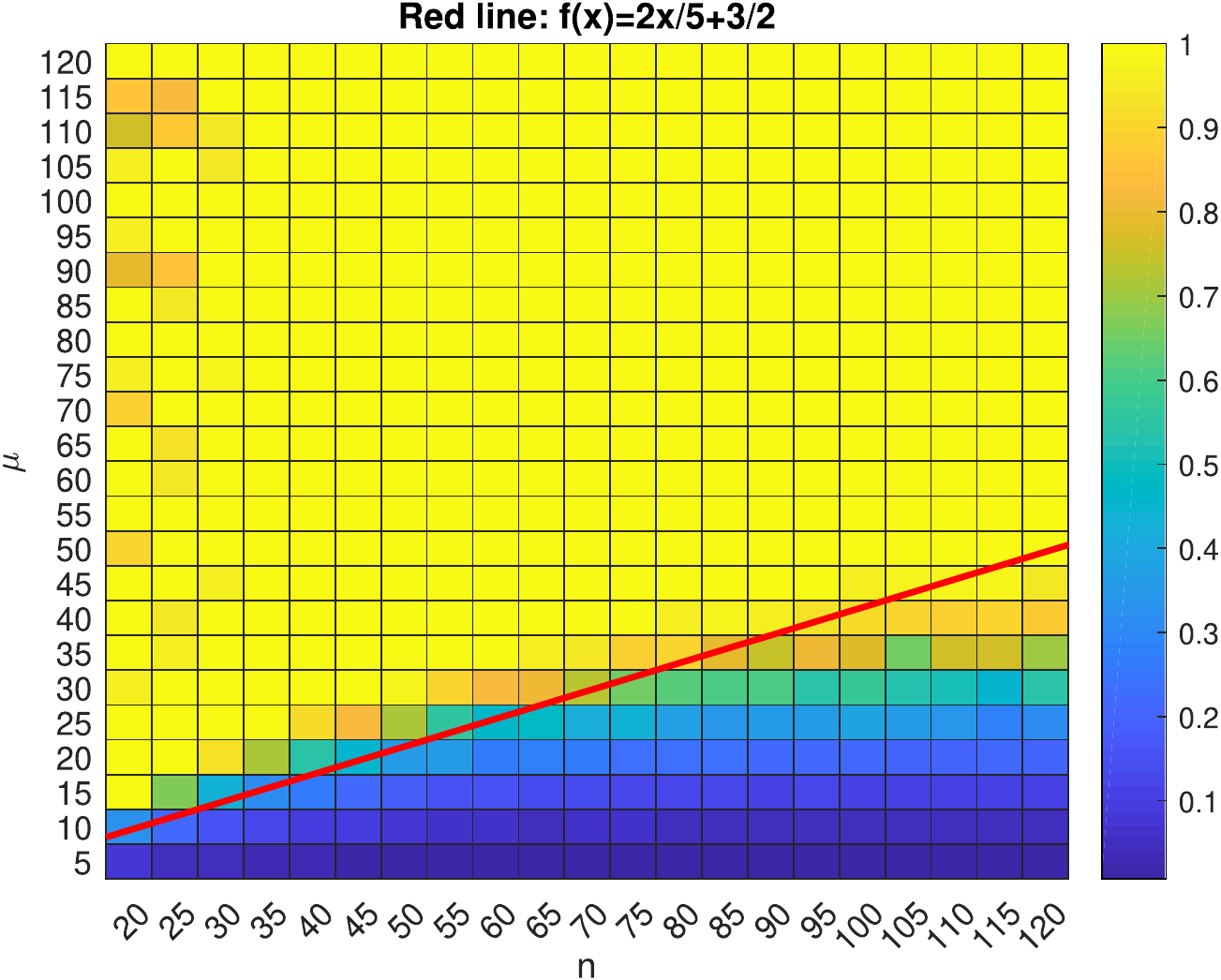}
\caption{$\lceil n/5\rceil$-opt}\label{fig:qap_unconstrained24}
\end{subfigure}
\caption{Heat-maps of mean time-to-optimum as percentages of the respective budgets $\mu n^2$. The red lines indicate the upper bound of the $\mu$ values to which the result of Theorem \ref{theo:runtime_qap} applies.
}
\label{fig:qap_unconstrained2}
\end{figure}

Figure \ref{fig:qap_unconstrained2} shows the 2D heat-maps of recorded numbers of steps as percentages of the corresponding iteration budgets. Each axis (x-axis for $n$, y-axis for $\mu$) is discretized into bins, each of which corresponds to a specific parameter value used in the experiment; each cell corresponding to a pair of values is assigned a color denoting the percentage. Dark colors indicate cases where the algorithm efficiently converges to maximum diversity, whereas bright colors suggest difficult cases. Heat-map \ref{fig:qap_unconstrained21} shows mostly dark cells, with bright cells only concentrated in areas where $\mu$ is multiples of $n$. Heat-map \ref{fig:qap_unconstrained21} exhibits a similar pattern, differing only in having more bright cells around these regions. Heat-maps \ref{fig:qap_unconstrained23} and \ref{fig:qap_unconstrained24} extend this trend further, with \ref{fig:qap_unconstrained24} showing mostly bright cells, and only relatively few dark cells at small $\mu$ values.

From Figure \ref{fig:qap_unconstrained2}, we can observe that the algorithm terminates well within the budget, except for when $\mu$ is close to multiples of $n$. In these cases, it frequently gets stuck in near-optimal diversity states for most of the budget. Note that the average required run-time is small even at larger $\mu$ and $n$, as long as $\mu$ is sufficiently far from multiples of $n$. This suggests that in such cases, the search trajectories made by incremental improvements almost guarantee reaching hard-to-escape local optima. The heat-maps suggest that these detrimental cases occur when $\mu\in[(t-\epsilon)n,(t+\epsilon)n]$, where $t\in\mathbb{N}^*$ and $\epsilon>0$. Assuming this, $\epsilon$ seems to increase at stronger mutation strengths, implying that local optima are more likely. It can also mean that the expected number of steps needed to escape local optima increases with mutation strength.

Within the upper bound of $\mu$ in the premise of Theorem \ref{theo:runtime_qap}, Algorithm \ref{alg:ea} seems to consume only a small portion of the budget using 2-opt, 3-opt and 4-opt, further reinforcing the observation that average case run-time is orders of magnitude smaller than worst-case. On the other hand, in case of $\lceil n/5\rceil$-opt, it exceeds the budgets near this bound at larger $n$. Since the mutation strengths are large in these instances, it suggests that $k$ values (in k-opt) contributes non-trivially to the expected run-time to optimum even when the lack of local optima is guaranteed, which agrees with the insight from Theorem \ref{theo:runtime_qap}.

As a side note, we observe from the experimental results on STSP in \cite{Do2020} that the algorithm, in unconstrained settings, fails to reach maximum diversity within the same budgets when the population size is close to a multiple of half the problem size (i.e. $\mu\approx kn/2$). This, when juxtaposed with the ``failure'' phenomena observed in our experiments, seems to mirror the difference in the upper bounds of the population size between Lemma \ref{lemma:ea_step_2opt_tsp} (STSP) and Lemma \ref{lemma:ea_step_kopt_qap} (QAP). While these are not demonstrated as comprehensively as in our experiment, we still find this similarity interesting, as it seems to suggest some universal property of mutation/local-search operators for different types of permutation-based representations (e.g. adjacency, assignment).

\subsection{Constrained diversity optimization}

In the constrained case, we look for the final diversity scores across varying thresholds $F=(1+\alpha)OPT$ and the extent to which optimizing for $D_2$ mitigate clustering, especially at small $\alpha$. Therefore, we consider $\alpha$ values 0.05, 0.2, 0.5, 1, and run the algorithm for $\mu n^2$ steps with no additional stopping criterion. Furthermore, we initiate the population with duplicates of the optimal solution to allow flexibility for meaningful behaviors. Aside from diversity scores, we also record the percentage of assignments belonging to exactly one solution (unique) out of $\mu n$ assignments in each final population.

\begin{table*}[ht]
\centering
\renewcommand{\arraystretch}{0.85}
\renewcommand\tabcolsep{2.1pt}
\caption{Diversity scores and the ratios of unique assignments in the final populations. The \hlc[gray!25]{\textbf{highlights}} denote greater values between the two approaches with statistical significance, based on Wilcoxon rank sum tests with 95\% confidence level.
}
\label{tb:qap_constrained}
\begin{footnotesize}
\begin{tabular}{crrcccccccccccc}
    \toprule
\multirow{3}{*}{\textbf{}}&\multirow{3}{*}{\textbf{$\mu$}} & \multirow{3}{*}{\textbf{$\alpha$}} &
\multicolumn{6}{c}{\bfseries Optimizing $D_1$} &\multicolumn{6}{c}{\bfseries Optimizing $D_2$}\\
\cmidrule(l{2pt}r{2pt}){4-9} \cmidrule(l{2pt}r{2pt}){10-15}
& & & \multicolumn{2}{c}{\bfseries $D_1$} & \multicolumn{2}{c}{\bfseries $D_2$} & \multicolumn{2}{c}{\bfseries unique percentage} & \multicolumn{2}{c}{\bfseries $D_1$} & \multicolumn{2}{c}{\bfseries $D_2$} & \multicolumn{2}{c}{\bfseries unique percentage} \\
\cmidrule(l{2pt}r{2pt}){4-5}\cmidrule(l{2pt}r{2pt}){6-7}\cmidrule(l{2pt}r{2pt}){8-9}\cmidrule(l{2pt}r{2pt}){10-11}\cmidrule(l{2pt}r{2pt}){12-13}\cmidrule(l{2pt}r{2pt}){14-15}
&&&mean&std&mean&std&mean&std&mean&std&mean&std&mean&std\\
\midrule
\multirow{20}{*}{\rotatebox[origin=c]{90}{Nug30}}&3&0.05&90.74\%&4.23\%&87.93\%&4.85\%&83.70\%&6.00\%&91.15\%&3.47\%&88.96\%&4.08\%&84.19\%&5.36\%\\&
&0.2&100.00\%&0.00\%&100.00\%&0.00\%&100.00\%&0.00\%&100.00\%&0.00\%&100.00\%&0.00\%&100.00\%&0.00\%\\&
&0.5&100.00\%&0.00\%&100.00\%&0.00\%&100.00\%&0.00\%&100.00\%&0.00\%&100.00\%&0.00\%&100.00\%&0.00\%\\&
&1&100.00\%&0.00\%&100.00\%&0.00\%&100.00\%&0.00\%&100.00\%&0.00\%&100.00\%&0.00\%&100.00\%&0.00\%\\\cmidrule(l{2pt}r{2pt}){2-15}
&10&0.05&\cellcolor[gray]{.9}\textbf{84.10\%}&2.05\%&48.06\%&10.09\%&23.00\%&4.13\%&81.71\%&1.87\%&\cellcolor[gray]{.9}\textbf{74.24\%}&2.47\%&\cellcolor[gray]{.9}\textbf{34.39\%}&2.44\%\\&
&0.2&100.00\%&0.00\%&100.00\%&0.00\%&100.00\%&0.00\%&100.00\%&0.00\%&100.00\%&0.00\%&100.00\%&0.00\%\\&
&0.5&100.00\%&0.00\%&100.00\%&0.00\%&100.00\%&0.00\%&100.00\%&0.00\%&100.00\%&0.00\%&100.00\%&0.00\%\\&
&1&100.00\%&0.00\%&100.00\%&0.00\%&100.00\%&0.00\%&100.00\%&0.00\%&100.00\%&0.00\%&100.00\%&0.00\%\\\cmidrule(l{2pt}r{2pt}){2-15}
&20&0.05&\cellcolor[gray]{.9}\textbf{84.00\%}&0.95\%&32.07\%&5.53\%&8.44\%&1.72\%&79.45\%&1.22\%&\cellcolor[gray]{.9}\textbf{68.61\%}&1.59\%&\cellcolor[gray]{.9}\textbf{16.72\%}&1.28\%\\&
&0.2&\cellcolor[gray]{.9}\textbf{99.95\%}&0.03\%&\cellcolor[gray]{.9}\textbf{99.09\%}&0.41\%&\cellcolor[gray]{.9}\textbf{98.98\%}&0.52\%&99.93\%&0.03\%&98.79\%&0.49\%&98.58\%&0.63\%\\&
&0.5&100.00\%&0.00\%&100.00\%&0.00\%&100.00\%&0.00\%&100.00\%&0.00\%&100.00\%&0.00\%&100.00\%&0.00\%\\&
&1&100.00\%&0.00\%&100.00\%&0.00\%&100.00\%&0.00\%&100.00\%&0.00\%&100.00\%&0.00\%&100.00\%&0.00\%\\\cmidrule(l{2pt}r{2pt}){2-15}
&50&0.05&\cellcolor[gray]{.9}\textbf{86.06\%}&0.71\%&17.44\%&2.50\%&2.58\%&0.49\%&79.98\%&0.81\%&\cellcolor[gray]{.9}\textbf{64.08\%}&1.01\%&\cellcolor[gray]{.9}\textbf{6.65\%}&0.52\%\\&
&0.2&\cellcolor[gray]{.9}\textbf{99.97\%}&0.01\%&90.62\%&0.48\%&19.79\%&0.26\%&99.72\%&0.02\%&\cellcolor[gray]{.9}\textbf{95.74\%}&0.24\%&\cellcolor[gray]{.9}\textbf{22.81\%}&0.41\%\\&
&0.5&\cellcolor[gray]{.9}\textbf{100.00\%}&0.00\%&91.28\%&0.39\%&20.00\%&0.00\%&100.00\%&0.00\%&\cellcolor[gray]{.9}\textbf{96.67\%}&0.00\%&\cellcolor[gray]{.9}\textbf{20.04\%}&0.04\%\\&
&1&\cellcolor[gray]{.9}\textbf{100.00\%}&0.00\%&91.41\%&0.48\%&20.00\%&0.00\%&100.00\%&0.00\%&\cellcolor[gray]{.9}\textbf{96.67\%}&0.00\%&\cellcolor[gray]{.9}\textbf{20.04\%}&0.06\%\\\hline
\multirow{20}{*}{\rotatebox[origin=c]{90}{Lipa90b}}&3&0.05&17.72\%&0.73\%&17.01\%&0.78\%&10.14\%&0.68\%&17.75\%&0.77\%&17.09\%&1.12\%&10.36\%&0.67\%\\&
&0.2&83.88\%&1.32\%&82.32\%&1.46\%&74.46\%&1.78\%&84.07\%&1.35\%&82.48\%&1.54\%&74.52\%&1.45\%\\&
&0.5&100.00\%&0.00\%&100.00\%&0.00\%&100.00\%&0.00\%&100.00\%&0.00\%&100.00\%&0.00\%&100.00\%&0.00\%\\&
&1&100.00\%&0.00\%&100.00\%&0.00\%&100.00\%&0.00\%&100.00\%&0.00\%&100.00\%&0.00\%&100.00\%&0.00\%\\\cmidrule(l{2pt}r{2pt}){2-15}
&10&0.05&17.44\%&0.40\%&13.85\%&1.30\%&9.11\%&0.38\%&17.48\%&0.43\%&\cellcolor[gray]{.9}\textbf{15.48\%}&0.59\%&\cellcolor[gray]{.9}\textbf{9.30\%}&0.22\%\\&
&0.2&78.23\%&1.33\%&44.26\%&11.52\%&38.70\%&8.79\%&\cellcolor[gray]{.9}\textbf{80.00\%}&0.62\%&\cellcolor[gray]{.9}\textbf{76.08\%}&0.87\%&\cellcolor[gray]{.9}\textbf{55.88\%}&0.55\%\\&
&0.5&100.00\%&0.00\%&100.00\%&0.00\%&100.00\%&0.00\%&100.00\%&0.00\%&100.00\%&0.00\%&100.00\%&0.00\%\\&
&1&100.00\%&0.00\%&100.00\%&0.00\%&100.00\%&0.00\%&100.00\%&0.00\%&100.00\%&0.00\%&100.00\%&0.00\%\\\cmidrule(l{2pt}r{2pt}){2-15}
&20&0.05&17.54\%&0.27\%&12.00\%&0.83\%&8.87\%&0.25\%&17.52\%&0.25\%&\cellcolor[gray]{.9}\textbf{14.92\%}&0.36\%&\cellcolor[gray]{.9}\textbf{9.26\%}&0.13\%\\&
&0.2&78.95\%&0.78\%&30.84\%&7.01\%&26.07\%&5.65\%&\cellcolor[gray]{.9}\textbf{79.62\%}&0.40\%&\cellcolor[gray]{.9}\textbf{74.70\%}&0.40\%&\cellcolor[gray]{.9}\textbf{54.94\%}&0.37\%\\&
&0.5&100.00\%&0.00\%&100.00\%&0.00\%&100.00\%&0.00\%&100.00\%&0.00\%&100.00\%&0.00\%&100.00\%&0.00\%\\&
&1&100.00\%&0.00\%&100.00\%&0.00\%&100.00\%&0.00\%&100.00\%&0.00\%&100.00\%&0.00\%&100.00\%&0.00\%\\\cmidrule(l{2pt}r{2pt}){2-15}
&50&0.05&\cellcolor[gray]{.9}\textbf{17.74\%}&0.21\%&9.38\%&0.52\%&7.98\%&0.46\%&17.60\%&0.17\%&\cellcolor[gray]{.9}\textbf{14.69\%}&0.24\%&\cellcolor[gray]{.9}\textbf{9.24\%}&0.10\%\\&
&0.2&\cellcolor[gray]{.9}\textbf{80.59\%}&0.45\%&15.16\%&3.04\%&10.44\%&2.21\%&78.71\%&0.27\%&\cellcolor[gray]{.9}\textbf{72.84\%}&0.27\%&\cellcolor[gray]{.9}\textbf{52.86\%}&0.29\%\\&
&0.5&100.00\%&0.00\%&100.00\%&0.00\%&100.00\%&0.00\%&100.00\%&0.00\%&100.00\%&0.00\%&100.00\%&0.00\%\\&
&1&100.00\%&0.00\%&100.00\%&0.00\%&100.00\%&0.00\%&100.00\%&0.00\%&100.00\%&0.00\%&100.00\%&0.00\%\\\hline
\multirow{20}{*}{\rotatebox[origin=c]{90}{Esc128}}&3&0.05&96.64\%&0.98\%&95.91\%&1.07\%&95.47\%&1.11\%&96.46\%&0.94\%&95.99\%&1.04\%&95.23\%&1.07\%\\&
&0.2&99.19\%&0.43\%&98.87\%&0.56\%&98.78\%&0.64\%&\cellcolor[gray]{.9}\textbf{99.41\%}&0.26\%&99.11\%&0.30\%&99.00\%&0.34\%\\&
&0.5&99.97\%&0.09\%&99.93\%&0.18\%&99.93\%&0.18\%&99.96\%&0.10\%&99.91\%&0.20\%&99.91\%&0.20\%\\&
&1&100.00\%&0.00\%&100.00\%&0.00\%&100.00\%&0.00\%&100.00\%&0.00\%&100.00\%&0.00\%&100.00\%&0.00\%\\\cmidrule(l{2pt}r{2pt}){2-15}
&10&0.05&95.69\%&1.17\%&85.97\%&4.53\%&83.07\%&4.06\%&95.51\%&0.49\%&\cellcolor[gray]{.9}\textbf{94.14\%}&0.48\%&\cellcolor[gray]{.9}\textbf{88.63\%}&0.69\%\\&
&0.2&98.80\%&0.70\%&94.35\%&4.32\%&91.72\%&4.18\%&98.93\%&0.22\%&\cellcolor[gray]{.9}\textbf{98.07\%}&0.36\%&\cellcolor[gray]{.9}\textbf{95.09\%}&0.55\%\\&
&0.5&99.92\%&0.06\%&99.48\%&0.27\%&99.24\%&0.52\%&\cellcolor[gray]{.9}\textbf{99.95\%}&0.04\%&\cellcolor[gray]{.9}\textbf{99.71\%}&0.18\%&\cellcolor[gray]{.9}\textbf{99.64\%}&0.27\%\\&
&1&100.00\%&0.00\%&100.00\%&0.00\%&100.00\%&0.00\%&100.00\%&0.00\%&100.00\%&0.00\%&100.00\%&0.00\%\\\cmidrule(l{2pt}r{2pt}){2-15}
&20&0.05&\cellcolor[gray]{.9}\textbf{96.39\%}&0.89\%&81.10\%&6.59\%&76.90\%&7.14\%&94.83\%&0.34\%&\cellcolor[gray]{.9}\textbf{93.08\%}&0.29\%&\cellcolor[gray]{.9}\textbf{84.70\%}&0.41\%\\&
&0.2&\cellcolor[gray]{.9}\textbf{99.06\%}&0.17\%&93.57\%&2.22\%&89.11\%&1.74\%&98.68\%&0.16\%&\cellcolor[gray]{.9}\textbf{97.44\%}&0.19\%&\cellcolor[gray]{.9}\textbf{91.23\%}&0.45\%\\&
&0.5&\cellcolor[gray]{.9}\textbf{99.90\%}&0.05\%&99.06\%&0.46\%&98.17\%&1.02\%&99.86\%&0.05\%&99.29\%&0.08\%&97.84\%&0.82\%\\&
&1&100.00\%&0.00\%&100.00\%&0.00\%&100.00\%&0.00\%&100.00\%&0.00\%&100.00\%&0.00\%&100.00\%&0.00\%\\\cmidrule(l{2pt}r{2pt}){2-15}
&50&0.05&\cellcolor[gray]{.9}\textbf{96.81\%}&1.04\%&65.25\%&17.04\%&57.76\%&17.24\%&94.52\%&0.23\%&\cellcolor[gray]{.9}\textbf{92.00\%}&0.21\%&\cellcolor[gray]{.9}\textbf{82.12\%}&0.18\%\\&
&0.2&\cellcolor[gray]{.9}\textbf{98.99\%}&0.13\%&88.62\%&3.71\%&83.69\%&4.21\%&98.38\%&0.10\%&\cellcolor[gray]{.9}\textbf{96.42\%}&0.16\%&\cellcolor[gray]{.9}\textbf{87.93\%}&0.34\%\\&
&0.5&\cellcolor[gray]{.9}\textbf{99.92\%}&0.02\%&98.01\%&0.59\%&96.30\%&1.05\%&99.76\%&0.05\%&\cellcolor[gray]{.9}\textbf{98.79\%}&0.09\%&\cellcolor[gray]{.9}\textbf{95.31\%}&0.37\%\\&
&1&100.00\%&0.00\%&100.00\%&0.01\%&100.00\%&0.01\%&100.00\%&0.00\%&99.99\%&0.02\%&99.99\%&0.02\%\\\hline
\end{tabular}
\end{footnotesize}
\end{table*}

Table \ref{tb:qap_constrained} shows a comparison in terms of $D_1$ and $D_2$ score averages as well as unique assignment percentage averages. Overall, maximum diversity is achieved reliably in most cases when $\alpha\geq0.5$. For Lipa90b, there are tremendous gaps in final diversity scores when $\alpha$ changes from $0.05$ to $0.2$, from $61\%$ to $66\%$. The differences are much smaller in other QAPLIB instances, no more than $16\%$. Also, at $\alpha=0.5$, maximum diversity is not reached as frequently for Esc128 as for other instances. These suggest significantly different cost distributions in the solution spaces associated with these QAPLIB instances. More specifically, it seems that the connected (via 2-opt neighborhood) region of 1.05-approximation solutions around the optimum is much smaller in Lipa90b than it is in other instances. Additionally, unique assignment percentages follow the same trend, and understandably decrease with bigger populations as uniqueness diminishes. This drop is most severe in Nug30 cases, since the increase in $\mu$ is the largest relative to the instance size.

Comparing the diversity scores from the two approaches, we can see trends consistent with those in the unconstrained case. Each approach predictably excels at maximizing its own measure over the other. That said, the $D_2$ approach does not fall far behind in $D_1$ scores, even in cases where statistical significance is observed (at most $7\%$ difference). Meanwhile, the $D_1$ approach's $D_2$ scores are much lower than those of the other, especially in hard cases (small $\alpha$ and large $\mu$), up to $46\%$ gap. This indicates that using the measure $\mathcal{D}$, Algorithm \ref{alg:ea} significantly reduces clustering, and equalizes assignments' representations almost as effectively as when using the measure $\mathcal{N}$. Additionally, we can observe similar differences in the percentages of unique assignments, which seem to correlate with $D_2$ stronger than with $D_1$.

Based on the standard deviations, it seems that the outputs are mostly stable. This suggests that the algorithm converges within the budget in most cases, and that it reaches similarly diverse populations across runs. The exceptions are when $\mathcal{N}$ is used at $\alpha\leq0.2$, the $D_2$ scores achieved exhibit relatively large variances. This agrees with the observation made in Figure \ref{fig:qap_unconstrained}, and confirms the existence of satisfactory populations of similar $D_1$ diversity that have wildly different $D_2$.

\section{Conclusion}\label{sec:conclusion}

We studied evolutionary diversity optimization in the Traveling Salesperson Problem and Quadratic Assignment Problem. In this type of optimization problem, the goal is to maximize diversity as quantified by some metric, and the constraint involves the solutions' qualities. We described the similarity and difference between the structure of a STSP/ATSP tour and that of a QAP mapping, and customized two diversity measures to each problem. We considered a baseline $(\mu+1)$ evolutionary algorithm that incrementally modifies the population using traditional mutation operators on one solution at a time, and scenarios where solutions are accepted regardless of quality. We showed that for any sufficiently small $\mu$, the algorithm guarantees maximum diversity in STSP within using 2-opt and 4-opt within $\mathcal{O}(\mu^2n^3)$ expected iterations, while 3-opt suffers from local optima even with very small $\mu$. For ATSP, we proved a worst-case run-time of $\mathcal{O}(\mu^2n^4)$ from using 3-opt and 4-opt, under more generous upper bounds of $\mu$. Lastly, we showed that in QAP, the algorithm reaches maximum diversity in $\mathcal{O}(2^{k-2}\mu^2n^2(n-k+1))$ steps using k-opt mutation as we describe in this work. In the same proof, we showed that the upper bound of $\mu$ guaranteeing the lack of local optima decreases as the mutation strength increases.

Additional experiments on QAPLIB instances shed light on differences on evolutionary trajectories when optimizing for the two diversity measures. Our results show heterogeneity in the correlation between the quality constraint threshold and the achieved diversity across different instances, and that the average practical performance is much more optimistic than the worst-case suggests. Furthermore, our experiment in unconstrained scenarios indicated that the algorithm tends to encounter local optima when $\mu$ is close to multiples of $n$, and that these detrimental regions expand with increasing mutation strength, agreeing with our theoretical insight.

In realistic settings, obtaining a solution satisfying a given quality criterion can be very difficult, an aspect that is not rigorously addressed in this work. For NP-hard problems like TSP and QAP, this typically necessitates exponential-time algorithms. For any-time algorithms, one might consider multi-modal optimization approaches to evolve many good solutions simultaneously. It is known that the number of solutions affect multi-modal searches' convergence speed tremendously. Therefore, it is an interesting challenge to craft efficient niching-based algorithms, for instance, to tackle the diverse solutions problem; one can expect that the desired population to this problem deviates significantly from that to the multi-modal problem on the same fitness landscape.

\section*{Acknowledgments}
This work was supported by the Phoenix HPC service at the University of Adelaide, and by the Australian Research Council through grant DP190103894.

\bibliographystyle{unsrt}
\bibliography{refs}

\end{document}